\documentclass[11pt]{article}

\usepackage[margin=1in]{geometry}
\usepackage{amsmath,amssymb,amsthm,mathtools,bm}
\usepackage{graphicx}
\usepackage{booktabs}
\usepackage{microtype}
\usepackage{caption}
\usepackage{algorithm}
\usepackage{algpseudocode}
\usepackage[authoryear]{natbib}
\usepackage[hidelinks]{hyperref}
\usepackage[capitalise,noabbrev]{cleveref}

\graphicspath{{figures/}}
\newcommand{\E}{\mathbb{E}}
\newcommand{\Prb}{\mathbb{P}}
\newcommand{\Nset}{\mathcal{N}}
\newcommand{\Aset}{\mathcal{A}}
\newcommand{\Sset}{\mathcal{S}}
\newcommand{\Hset}{\mathcal{H}}

\newcommand{\clip}{\operatorname{clip}}
\newcommand{\argmax}{\operatorname*{arg\,max}}

\newcommand{\VPI}{\operatorname{VPI}}

\theoremstyle{plain}
\newtheorem{theorem}{Theorem}
\newtheorem{lemma}{Lemma}
\newtheorem{proposition}{Proposition}
\newtheorem{corollary}{Corollary}
\theoremstyle{definition}
\newtheorem{assumption}{Assumption}
\newtheorem{definition}{Definition}
\theoremstyle{remark}
\newtheorem{remark}{Remark}

\title{\textbf{PROSE: A Theory of Optimal Stopping with\\[2pt]
Perishable Evidence for Peer Selection in\\
Intermittently Connected Decentralised Learning}}

\author{
Christos Anagnostopoulos\thanks{School of Computing Science, University of Glasgow, Glasgow, United Kingdom. Email: \texttt{christos.anagnostopoulos@glasgow.ac.uk}.}
}
\date{}

\begin{document}
\maketitle

\begin{abstract}
\noindent
Decentralised federated learning removes the aggregation server but makes
collaboration dependent on transient peer availability. In mobile and
intermittently connected systems, evaluating a promising peer consumes contact
time and may cause the exchange opportunity itself to vanish, so that the
evidence a learner gathers about a peer is \emph{perishable}: it decays because
links expire and because peer models drift while old measurements age. This
paper develops a self-contained theory of optimal stopping for the resulting
peer-selection problem. We formalise a receiver's within-contact decision as a
finite-horizon Markov optimal-stopping problem with costly information
acquisition and a future-arrival outside option, and prove that it admits an
optimal policy characterised by a reservation value (Snell-envelope structure).
Around this formulation we prove: (i) stage-uniform, drift-aware concentration
and a maximin \emph{certification} rule that is correct with high probability
together with a finite-sample identification bound; (ii) a mobility-aware
value-of-information stopping rule and comparative statics showing that higher
link hazard lowers the value of continued probing and enlarges the stopping
region; (iii) a closed-form value of waiting under marked-Poisson contact
arrivals, together with a search-theoretic reservation value whose comparative
statics we characterise; and (iv) a myopic-optimality theorem establishing that,
in sufficiently volatile (monotone) mobility regimes, the one-step
confidence-safe rule is a sound surrogate for the optimal policy and never stops
prematurely. We instantiate the theory as \textbf{PROSE} (\emph{Perishable-evidence
Reservation-value Optimal Stopping for Exchange}), a lightweight, fully local
policy, and delineate the static-contact and drift-free limits in which classical
sequential decision problems are recovered. The development is entirely analytical:
the contribution is a theory of when, whether, and with whom a model exchange
should occur under perishable evidence.

\medskip
\noindent\textbf{Keywords:} optimal stopping; decentralised federated learning;
peer selection; perishable evidence; mobility; value of information; sequential
decision-making.
\end{abstract}

% ==================================================================== section 1
\section{Introduction}
\label{sec:intro}

Federated learning (FL) trains models from distributed data without centralising
raw examples \citep{mcmahan2017,kairouz2021}. Classical FL assumes an
orchestration server, whereas decentralised variants replace server-mediated
aggregation with peer-to-peer exchanges or consensus over a communication graph
\citep{lian2017,hegedus2019,savazzi2020,li2022defkt,hashemi2022}. Removing the
aggregation point moves an important decision to the devices themselves:
\emph{which reachable peer should be used for the next model exchange?}

The question is sharper when the graph is mobile. In vehicular, wearable,
robotic, or opportunistic networks, the neighbour set of device $i$ is not fixed
but evolves as $\Nset_i(t)$, and recent work studies decentralised FL over
time-varying mobile-computing graphs \citep{li2026tmc}, mobility-aware vehicular
FL \citep{chen2025mobility}, and participation under high mobility
\citep{tu2026dsfl}. These methods establish that topology volatility directly
affects who can train together, for how long, and at what communication cost.
Most topology-aware methods, however, treat the quality or utility of an
available collaboration as known, immediately measurable, or embedded in an
optimisation or reinforcement-learning state.

In practice, peer quality is itself uncertain and must be \emph{acquired}. A
receiver may hold a small non-private anchor set and request logits or a scalar
compatibility score from a candidate peer \emph{before} sending a full model.
Such a probe is far cheaper than a model-bearing exchange, but it is not free: it
consumes time, bandwidth, and, crucially, \emph{contact lifetime}. Under
transient connectivity this creates a decision problem absent from static peer
selection. A promising peer can vanish during an additional probe; a highly
compatible peer can be a poor practical choice if the link is unlikely to survive
a model transfer; and delaying an exchange can be rational when new contacts are
likely to arrive soon. The value of one more observation therefore depends
jointly on \emph{what could be learned} and on \emph{whether the opportunity
survives long enough to exploit it}. We call evidence with this dual fragility
\emph{perishable}: it decays because links expire (an opportunity clock) and
because a peer keeps training while disconnected, so that old measurements
describe a stale model (a drift clock). \Cref{fig:setting,fig:timeline}
illustrate the setting and one decision episode.

\paragraph{Contribution and scope.}
This paper is a \emph{theory} paper. Rather than proposing a system, it isolates
and analyses the sequential decision that occurs \emph{inside a contact
opportunity} and treats peer evidence as perishable. Our contributions are as
follows.
\begin{enumerate}
\item \textbf{An optimal-stopping formulation.} We model the receiver's decision
as a finite-horizon Markov optimal-stopping problem with four actions
---\emph{exchange}, \emph{probe}, \emph{wait}, and \emph{random}---costly
information acquisition, and a future-arrival outside option. We prove existence
of an optimal policy and a dynamic-programming (Snell-envelope) characterisation,
and show that the optimal policy is a reservation-value threshold rule
(\Cref{sec:formulation}).
\item \textbf{Confidence-safe certification.} We derive stage-uniform,
drift-aware confidence bounds and a maximin certification rule that selects a
peer only when its \emph{realisable} value is provably better than every
alternative. We prove correctness with high probability and a finite-sample
identification bound (\Cref{sec:certify}).
\item \textbf{Mobility-aware value of information.} We prove a sufficient
do-not-probe stopping rule that charges a probe both its direct cost and the
opportunity loss caused by possible disconnection, and establish comparative
statics: higher link hazard lowers continuation value, enlarges the stopping
region relative to the waiting and fallback options, and monotonically lowers the
acquisition index (\Cref{sec:voi}).
\item \textbf{The waiting option.} We give a closed-form value of waiting under
marked-Poisson contact arrivals and, in the stationary limit, a search-theoretic
reservation value whose monotonicity in arrival rate, delay cost, and mark
distribution we characterise (\Cref{sec:wait}).
\item \textbf{Myopic optimality.} We prove that in sufficiently volatile
(\emph{monotone}) mobility regimes the exact one-step-look-ahead rule is optimal,
and that the implementable confidence-safe rule is a \emph{sound} surrogate:
every state in which it halts probing is one in which halting is optimal, so it
never stops prematurely (\Cref{sec:myopic}).
\end{enumerate}
We instantiate the theory as a lightweight, fully local policy, PROSE
(\Cref{sec:policy}), and describe the classical limits it recovers
(\Cref{sec:limits}). The treatment is entirely analytical throughout.

\begin{figure}[t]
  \centering
  \includegraphics[width=\linewidth]{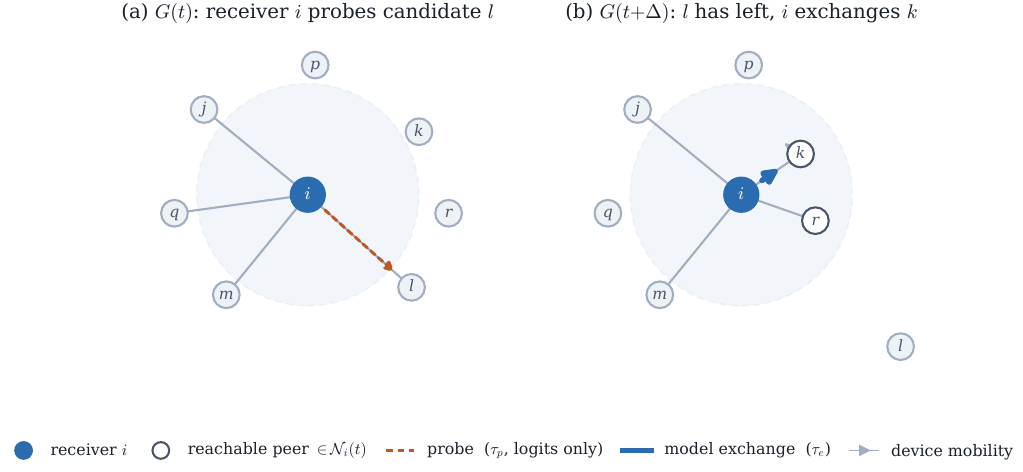}
  \caption{Decentralised learning over a temporal contact graph
  $G(t)=(\mathcal V,\mathcal E(t))$. Receiver $i$ (blue) can probe or exchange
  only with peers currently inside its communication range,
  $\Nset_i(t)$. Between $t$ and $t+\Delta$ mobility rewrites the neighbour set:
  candidate $l$ leaves before an exchange can occur, while $k$ becomes reachable.
  A probe transfers only anchor logits ($\tau_p$); an exchange transfers the full
  model ($\tau_e\gg\tau_p$). The decision is which action to take, and when.}
  \label{fig:setting}
\end{figure}

\begin{figure}[t]
  \centering
  \includegraphics[width=\linewidth]{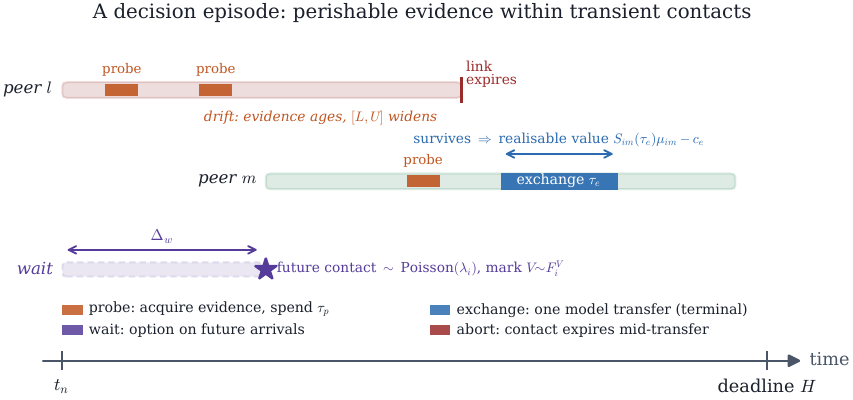}
  \caption{A single decision episode at receiver $i$ over the horizon
  $[t_n,H]$. Peer $l$ is probed twice; its evidence ages (drift widens its
  interval) and its link expires before any transfer, an \emph{aborted}
  opportunity. Peer $m$ is probed once and then exchanged because the contact is
  estimated to survive the transfer duration $\tau_e$. Waiting holds an option on
  a future contact arriving according to a marked-Poisson process. Only one
  model-bearing transfer is initiated per episode.}
  \label{fig:timeline}
\end{figure}

% ==================================================================== section 2
\section{Related Work}
\label{sec:related}

\paragraph{Decentralised and gossip-based learning.}
Decentralised optimisation replaces the parameter server with local communication
over a graph. D-PSGD established convergence and systems advantages for
decentralised stochastic optimisation \citep{lian2017}; gossip learning exchanges
models between peers without a central aggregator and is designed for settings
with churn \citep{hegedus2019}. Consensus-based serverless FL
\citep{savazzi2020}, knowledge-transfer-based decentralised FL
\citep{li2022defkt}, and communication-constrained decentralised FL with multiple
gossip steps \citep{hashemi2022} show that the communication graph and exchange
protocol shape learning behaviour. These works motivate peer-to-peer learning but
do not analyse how much evidence should be acquired about a transient peer before
committing a full model exchange.

\paragraph{Time-varying and mobile FL.}
Dynamic connectivity is now a first-class FL concern. DACFL studies dynamic
average consensus over time-varying sensor networks \citep{dacfl2022};
\citet{li2026tmc} jointly address time-varying topology and heterogeneity through
topology learning and resource optimisation; \citet{chen2025mobility} formulate
mobility-aware vehicular FL as a Dec-POMDP solved by multi-agent PPO; and
\citet{tu2026dsfl} use dynamic sketches for high-mobility participation. Our focus
is complementary: rather than learning a topology, leader policy, or sketch
repository, we analyse the sequential decision inside a contact opportunity when
the candidate itself may disappear before an observation can be exploited.

\paragraph{Optimal stopping, sequential analysis, and search.}
Optimal stopping formalises when the expected benefit of further observation no
longer justifies delay or acquisition cost \citep{peskir2006,ferguson2006}. The
theory has deep roots in sequential analysis \citep{wald1947} and statistical
decision theory \citep{degroot1970}, and the monotone case admits a
one-step-look-ahead solution \citep{chow1971}. The waiting option is a search
problem in the spirit of the McCall job-search model \citep{mccall1970}, in which
a reservation value separates acceptance from continued search. Our confidence
analysis uses concentration for bounded observations \citep{hoeffding1963}, and
the certification objective is a fixed-confidence best-arm identification problem
\citep{evendar2006,jamieson2014}. The novelty here is to make continuation value
depend explicitly on residual contact survival and on the option value of future
peer arrivals, producing a stopping criterion in which information is perishable
both because links expire and because model quality drifts while old probes age.

% ==================================================================== section 3
\section{The Perishable-Evidence Selection Problem}
\label{sec:model}

\subsection{Decentralised learning on a temporal graph}
Consider $N$ devices $\mathcal V=\{1,\ldots,N\}$. Device $i$ holds private data
$D_i$ and local objective $F_i(w)=|D_i|^{-1}\sum_{z\in D_i}\ell(w;z)$, and the
population objective is $F(w)=\sum_i p_iF_i(w)$; no server aggregates all updates.
Each device maintains a local model $w_i$. Communication is described by a
temporal graph $G(t)=(\mathcal V,\mathcal E(t))$ with
$\Nset_i(t)=\{j:(i,j)\in\mathcal E(t)\}$. A device alternates local optimisation
with opportunistic exchange: if $i$ receives peer model $w_j$, a generic pairwise
update is $w_i^{+}=(1-\alpha_{ij})w_i^{-}+\alpha_{ij}w_j^{-}$ with
$\alpha_{ij}\in(0,1]$, followed by local SGD. The theory below is agnostic to the
merge operator; it concerns \emph{whether, when, and with whom} the single
model-bearing exchange of a decision episode should occur.

\subsection{Transient contacts}
When $j\in\Nset_i(t)$, let $R_{ij}(t)$ be the residual lifetime of the link, with
conditional survival function
\begin{equation}
S_{ij}(\Delta\mid t)=\Prb\bigl(R_{ij}(t)\ge\Delta\mid(i,j)\in\mathcal E(t)\bigr).
\label{eq:survival}
\end{equation}
A full model exchange takes duration $\tau_e$; one lightweight probe takes
$\tau_p\ll\tau_e$. The usefulness of $j$ therefore depends on
$S_{ij}(\tau_e\mid t)$, not only on statistical model quality. The theory
requires only that a survival estimate be available. As a concrete lightweight
estimator, device $i$ maintains cumulative observed contact time $T_{ij}$ and
disconnection count $d_{ij}$ and forms the shrinkage exponential-hazard estimate
\begin{equation}
\widehat h_{ij}=\frac{d_{ij}+a_0}{T_{ij}+b_0},\qquad
\widehat S_{ij}(\Delta)=\exp(-\widehat h_{ij}\Delta),
\label{eq:hazard}
\end{equation}
with $a_0,b_0>0$ stabilising sparse histories.

\subsection{Lightweight, perishable peer evidence}
Each receiver has a small non-private anchor pool $\Aset_i$ (a public calibration
set or generated samples). To probe $j$, device $i$ requests logits or a scalar
score for one anchor mini-batch; the full peer model is not transferred. Let
$X_{ij,r}\in[0,1]$ be the normalised compatibility from probe $r$, with
time-dependent mean $\mu_{ij}(t)=\E[X_{ij,r}\mid t]$. Let $c_e$ be the normalised
exchange cost and $c_p$ the probe cost. The \emph{realisable current value} of an
exchange with $j$ is
\begin{equation}
v_{ij}(t)=S_{ij}(\tau_e\mid t)\,\mu_{ij}(t)-c_e.
\label{eq:realvalue}
\end{equation}
A statistically strong peer may therefore have low practical value when its
residual-contact probability is small. Evidence is perishable along two axes:
$S_{ij}$ decays as contact time is consumed, and $\mu_{ij}(t)$ drifts as the peer
continues local training while disconnected. \Cref{fig:twoclocks} depicts these
two clocks and the interior stopping point they induce.

The multiplicative form \eqref{eq:realvalue} models the realisable value as
statistical compatibility \emph{gated} by the probability that the exchange
completes; it treats link survival and model compatibility as separable, which is
appropriate when link duration is governed by mobility and radio conditions rather
than by model content. Correlated regimes (for example, a peer whose compatibility
and contact stability share a common cause) can be accommodated by replacing the
product with any value functional that is coordinatewise non-decreasing in
$(S_{ij},\mu_{ij})$; all interval-propagation and monotonicity results below use
only this monotonicity, not the product form itself.

\begin{figure}[t]
  \centering
  \includegraphics[width=0.62\linewidth]{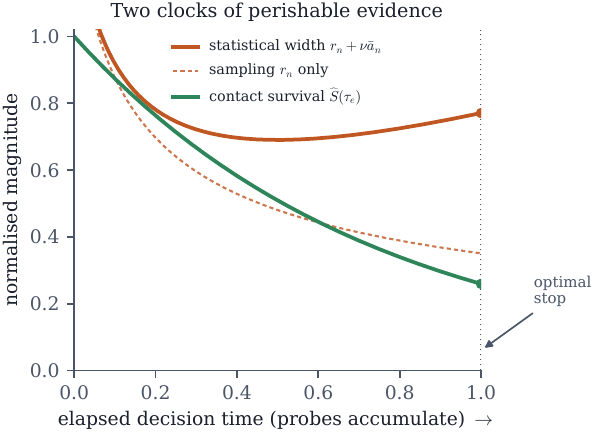}
  \caption{The two clocks of perishable evidence. Statistical uncertainty first
  falls as probes accumulate (sampling half-width $r_n\propto n^{-1/2}$) but then
  rises as cached evidence ages (staleness term $\nu\bar a_n$), while contact
  survival $\widehat S(\tau_e)$ decays monotonically as contact time is consumed.
  The value that actually matters, realisable value, is maximised at an interior
  time, which is the object an optimal stopping rule must locate.}
  \label{fig:twoclocks}
\end{figure}

% ==================================================================== section 4
\section{The Optimal-Stopping Formulation}
\label{sec:formulation}

\subsection{Decision state, actions, and rewards}
A decision episode begins when a collaboration opportunity is due and at least one
contact is available, and runs until a model-bearing action is taken or a deadline
$H$ is reached. At decision stage $n$ (time $t_n\le H$) the receiver observes the
state
\begin{equation}
s_n=\bigl(t_n,\ \Nset_i(t_n),\ \Hset_n,\ \widehat{\bm S}_n,\ \widehat\lambda_i,\ \widehat F_i^V\bigr)\in\Sset,
\label{eq:state}
\end{equation}
where $\Hset_n$ collects the probe observations and their ages, $\widehat{\bm
S}_n$ the residual-contact survival estimates for reachable peers, and
$(\widehat\lambda_i,\widehat F_i^V)$ the receiver's estimated future-contact
arrival rate and value-mark distribution. The admissible actions are
\begin{equation}
\Aset(s)=\{\textsc{Exchange}(j)\}_{j\in\Nset_i}\ \cup\
\{\textsc{Probe}(j)\}_{j:\,n_{ij}<n_{\max}}\ \cup\ \{\textsc{Wait}\}\ \cup\ \{\textsc{Random}\}.
\end{equation}
\textsc{Exchange} and \textsc{Random} are \emph{terminal} (stopping) actions:
they initiate the one model transfer and end the episode. \textsc{Probe} and
\textsc{Wait} are \emph{continuation} actions: they pay a cost and transition to a
new (random) state. Let $V(s)$ denote the optimal expected remaining net value
from state $s$, and let $v_0$ be the deterministic fallback (idle) value obtained
when the episode ends with no exchange. Writing $Q_E,Q_P,Q_W,Q_R$ for the four
action values, the dynamic-programming equation is
\begin{equation}
V(s)=\max\Bigl\{\max_{j}Q_E(s,j),\ \max_{j}Q_P(s,j),\ Q_W(s),\ Q_R(s),\ v_0\Bigr\},
\label{eq:bellman}
\end{equation}
where the terminal option $v_0$ (do nothing) guarantees $V(s)\ge v_0$ and is
distinct from the random-exchange value $Q_R$.
The exchange and probe values are
\begin{align}
Q_E(s,j)&=\widehat S_{ij}(\tau_e)\,\widehat\mu_{ij}-c_e,\label{eq:qE}\\
Q_P(s,j)&=-c_p+q_{ij}\,\E[V(s^+)]+(1-q_{ij})\,\E[V(s^{\mathrm{drop}})],
\qquad q_{ij}=\widehat S_{ij}(\tau_p).\label{eq:qprobe}
\end{align}
Here $s^+$ is the successor state when the probe completes while $j$ remains
connected (a new observation is appended to $\Hset_n$ and the bounds are
refreshed), and $s^{\mathrm{drop}}$ is the successor when $j$ disconnects before
the probe completes: $j$ is removed from the neighbour set, $\Hset_n$ is unchanged
(no completed observation is obtained), and only the decision time and
mobility-dependent quantities are refreshed. The decomposition \eqref{eq:qprobe}
follows from the law of total expectation over the survival event
$A_{ij}=\{R_{ij}(t_n)\ge\tau_p\}$, with $\Prb(A_{ij})=q_{ij}$; the direct cost
$c_p$ is paid in either branch because the probe has already been initiated. The
waiting value $Q_W$ and random value $Q_R$ are specified in
\Cref{sec:wait,sec:policy}.

\subsection{Well-posedness and existence}
Although the state space is high-dimensional and partly continuous, the episode
has a bounded number of continuation steps, which makes backward induction valid.

\begin{proposition}[Finite effective horizon]
\label{prop:horizon}
Suppose each probe consumes at least $\tau_p>0$ of elapsed time and each wait
consumes at least $\Delta_w>0$, and the episode terminates no later than the
deadline $H<\infty$. Then along any trajectory the number of continuation
(\textsc{Probe} or \textsc{Wait}) actions is at most
$\bar n:=\lfloor H/\min(\tau_p,\Delta_w)\rfloor<\infty$, and the number of
probes of any fixed peer is at most $n_{\max}$.
\end{proposition}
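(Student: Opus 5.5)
The plan is a direct counting argument on elapsed time along a single trajectory. I fix an arbitrary trajectory $s_0,s_1,\ldots$ generated by any admissible policy and write $t_0\ge 0$ for the episode start time. I then let $K$ denote the number of continuation actions taken before termination, and write $\delta_k$ for the elapsed time consumed by the $k$-th continuation action. By hypothesis, $\delta_k\ge\tau_p$ if that action is a \textsc{Probe} and $\delta_k\ge\Delta_w$ if it is a \textsc{Wait}. Hence $\delta_k\ge\delta_{\min}:=\min(\tau_p,\Delta_w)>0$ in either case.

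This includes the \textsc{Probe} branch in which $j$ disconnects. The probe has been initiated and its time is charged in both branches of \eqref{eq:qprobe}. I would state explicitly that the elapsed-time hypothesis applies to both branches.

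The key inequality comes from decision times being nondecreasing. After $K$ continuation actions the clock reads at least $t_0+\sum_{k\le K}\delta_k\ge K\delta_{\min}$. Since every decision stage satisfies $t_n\le H$ and the episode ends by the deadline, I get $K\delta_{\min}\le H$. Because $K$ is an integer, this gives $K\le\lfloor H/\delta_{\min}\rfloor=\bar n$, which is finite since $H<\infty$ and $\delta_{\min}>0$.

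The main subtlety is this boundary convention: whether a continuation action may be initiated at a time $t_n\le H$ and then overrun $H$. If so, the count could be $\bar n+1$. I would handle this by reading ``terminates no later than $H$'' as requiring every completed continuation to end by $H$. If the convention instead only constrains start times, the bound becomes $\lfloor H/\delta_{\min}\rfloor+1$, still finite, and I would note this remark.

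For the per-peer bound, I use the admissible action set directly. \textsc{Probe}$(j)$ belongs to $\Aset(s)$ only when $n_{ij}<n_{\max}$. Each completed probe of $j$ increments $n_{ij}$ by one, and a dropped probe removes $j$ from $\Nset_i$. Since $n_{ij}$ never decreases within an episode, a simple induction shows $n_{ij}\le n_{\max}$ at every stage. So at most $n_{\max}$ completed probes of $j$ occur.

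A dropped probe could add one further initiated probe. However, $j$ is then removed from the neighbour set and cannot be probed again unless it reappears. I would assume, in line with \eqref{eq:state}, that $n_{ij}$ counts initiated probes. Under that assumption the bound $n_{\max}$ is exact.
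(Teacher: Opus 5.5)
Your proof is correct and follows the paper's argument: each continuation action, including a dropped probe, advances elapsed time by at least $\min(\tau_p,\Delta_w)$ and the episode ends by $H$, while the per-peer cap is read off the admissibility constraint $n_{ij}<n_{\max}$ in $\Aset(s)$. Your remark about a continuation overrunning $H$ rightly pins down what ``terminates no later than $H$'' must mean, but the paper counts completed probes, not initiated ones (only completed observations enter $\Hset_n$, and \Cref{alg:prose} increments $n_{ij}$ only on completion), so the $n_{\max}$ cap applies to completed probes, as your main argument already shows.
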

\begin{proof}
Each continuation action advances elapsed episode time by at least
$\min(\tau_p,\Delta_w)$, and the episode ends at or before $H$; hence at most
$\lfloor H/\min(\tau_p,\Delta_w)\rfloor$ continuation actions occur. The per-peer
cap is the constraint $n_{ij}<n_{\max}$ in $\Aset(s)$.
\end{proof}

\begin{assumption}[Bounded rewards]
\label{ass:bounded}
Compatibility scores satisfy $X_{ij,r}\in[0,1]$, survival estimates lie in
$[0,1]$, and the costs $c_e,c_p,c_w,\lvert v_0\rvert$ are finite. Consequently
every terminal reward lies in a fixed bounded interval $[\underline g,\bar g]$.
\end{assumption}

\begin{theorem}[Existence and dynamic-programming optimality]
\label{thm:existence}
Under \Cref{prop:horizon} and \Cref{ass:bounded}, the optimal value function $V$
is the unique bounded solution of the finite-horizon Bellman equation
\eqref{eq:bellman} obtained by backward induction on the number of remaining
continuation steps, and there exists a (Markov, deterministic) optimal policy
$\pi^\star$ that in each state selects an action attaining the maximum in
\eqref{eq:bellman}.
\end{theorem}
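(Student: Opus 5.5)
The plan is a standard backward induction on the number of remaining continuation steps, made rigorous by \Cref{prop:horizon}. Augment the state with $k$, the number of continuation actions still available before the deadline $H$ forces termination; by \Cref{prop:horizon}, $k\le\bar n<\infty$ along every trajectory, so the stage index is well defined. Define $V_0(s)$ as the best terminal reward, i.e.\ the maximum of $\max_j Q_E(s,j)$, $Q_R(s)$ and $v_0$. Then set
\[
V_k(s)=\max\Bigl\{V_0(s),\ \max_j\bigl(-c_p+q_{ij}\E[V_{k-1}(s^+)]+(1-q_{ij})\E[V_{k-1}(s^{\mathrm{drop}})]\bigr),\ Q_W^{(k-1)}(s)\Bigr\},
\]
where $Q_W^{(k-1)}$ is the waiting value computed with $V_{k-1}$ as continuation. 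Since every trajectory from any state stops within $\bar n$ steps, $V:=V_{\bar n}$ (equivalently $V_{k(s)}$) satisfies \eqref{eq:bellman} exactly.

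\textbf{Step 1: boundedness and well-definedness.} By induction on $k$, $V_k$ takes values in $[\underline g-\bar n\max(c_p,c_w),\ \bar g]$, using \Cref{ass:bounded}. Hence every conditional expectation in \eqref{eq:qprobe} and in $Q_W$ is finite. Each maximum is over finitely many actions ($\Nset_i$ is finite and $n_{\max}$ caps the probes), so it is attained. Measurability of $V_k$ is the only technical point. I would handle it by assuming that the transition kernels are measurable (Borel) in $s$; then $V_k$ is a finite maximum of measurable functions and integrals against kernels, hence measurable. I expect this measurable-selection/kernel issue on the continuous part of $\Sset$ to be the main obstacle. I would resolve it with the finiteness of the action set, which avoids needing a measurable selection theorem: pick the argmax action by a fixed tie-breaking order on the finite action list, which is a measurable rule.

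\textbf{Step 2: optimality.} Let $\pi^\star$ choose, in state $s$ with $k$ remaining steps, the tie-broken maximiser. For an arbitrary (possibly history-dependent, randomised) policy $\pi$, show by induction on $k$ that its expected value from $s$ is at most $V_k(s)$: condition on the first action and its outcome via the tower property, apply the induction hypothesis to the successor state, and bound the result by the maximum. Conversely, $\pi^\star$ attains equality at each stage by construction, by the same induction. Hence $V$ is the optimal value, and $\pi^\star$ is Markov, deterministic and optimal.

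\textbf{Step 3: uniqueness.} Suppose $W$ is any bounded solution of \eqref{eq:bellman} posed with the horizon constraint that at $k=0$ only terminal actions are admissible. Then $W=V_0$ at $k=0$, and if $W=V_{k-1}$ then the Bellman equation forces $W=V_k$. By induction, $W=V$. The statement's phrase ``obtained by backward induction'' means exactly this stage-indexed uniqueness. No contraction argument is needed, because the horizon is finite.
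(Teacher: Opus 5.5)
Your proposal is correct and follows the paper's proof essentially step for step: backward induction on the integer potential $k(s)$ of remaining continuation steps, boundedness from \Cref{ass:bounded}, uniqueness because each $V^{(k)}$ is pinned down by $V^{(k-1)}$, and a Markov deterministic argmax policy. Your Step~2 is more complete than the paper, which only cites ``a standard policy-evaluation argument'' for $V^{\pi^\star}=V$: you also give the upper bound against arbitrary history-dependent randomised policies, which is what identifies the recursion's solution with the optimal value, and a measurable tie-breaking rule.
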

\begin{proof}
By \Cref{prop:horizon} the number of remaining continuation steps
$k(s)\in\{0,1,\ldots,\bar n\}$ is a well-defined, non-negative, integer potential
that strictly decreases at every continuation action. Define $V^{(0)}(s)$ on
states with $k(s)=0$ (no continuation admissible) as the best terminal reward
$\max\{\max_j Q_E(s,j),Q_R(s),v_0\}$, which is bounded by \Cref{ass:bounded}. For
$k\ge 1$, given $V^{(k-1)}$ on all states reachable in one continuation step,
define
\[
V^{(k)}(s)=\max\Bigl\{\underbrace{\max_j Q_E(s,j),\,Q_R(s),\,v_0}_{\text{terminal}},\ \underbrace{\max_j Q_P(s,j),\,Q_W(s)}_{\text{continuation, using }V^{(k-1)}}\Bigr\},
\]
where the continuation values are the finite expectations in \eqref{eq:qprobe}
and \eqref{eq:wait} taken against $V^{(k-1)}$. Each $V^{(k)}$ is a finite maximum
of bounded quantities and is therefore bounded; the recursion terminates after
$\bar n$ steps because $k$ cannot exceed $\bar n$. Setting $V(s)=V^{(k(s))}(s)$
yields a bounded function satisfying \eqref{eq:bellman}. Uniqueness among bounded
solutions follows because backward induction pins down $V^{(k)}$ from
$V^{(k-1)}$ with no free choices. The maximising action in each state defines a
Markov deterministic policy $\pi^\star$; a standard policy-evaluation argument
(the value of $\pi^\star$ satisfies the same recursion) shows $V^{\pi^\star}=V$,
so $\pi^\star$ is optimal.
\end{proof}

\subsection{Snell-envelope and reservation-value structure}
Separating terminal from continuation actions exposes the optimal-stopping
skeleton of the problem. Define the \emph{best immediate stopping value} and the
\emph{continuation value}
\begin{equation}
g(s)=\max\Bigl\{\max_j Q_E(s,j),\,Q_R(s),\,v_0\Bigr\},\qquad
W(s)=\max\Bigl\{\max_j Q_P(s,j),\,Q_W(s)\Bigr\},
\label{eq:gW}
\end{equation}
where $v_0$ is the deterministic fallback (idle) value obtained when no exchange
is made, so that $V(s)=\max\{g(s),W(s)\}$ by \eqref{eq:bellman}.

\begin{proposition}[Snell envelope and reservation value]
\label{prop:snell}
Under the hypotheses of \Cref{thm:existence}, $V$ is the smallest bounded
function dominating the immediate stopping value and stable under continuation,
i.e.\ $V(s)\ge g(s)$ and $V(s)\ge W(s)$ with equality of the maximum, so $V$ is
the (finite-horizon) Snell envelope of the stopping-reward process. It is optimal
to take a terminal action at the first stage where
\begin{equation}
g(s)\ \ge\ \rho(s):=W(s),
\label{eq:reservation}
\end{equation}
and to continue otherwise. Consequently the optimal policy is a reservation-value
threshold rule: stop and exchange (or take the best terminal option) as soon as
the best realisable value on offer reaches the reservation level $\rho(s)$, which
depends on the current peer only through the continuation.
\end{proposition}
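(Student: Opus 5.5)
The plan is to obtain everything from \Cref{thm:existence} together with the split \eqref{eq:gW}, by induction on the potential $k(s)$ of remaining continuation steps from \Cref{prop:horizon}.

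\textbf{Step 1 (dominance).} Regrouping the five terms of \eqref{eq:bellman} into terminal and continuation blocks gives $V(s)=\max\{g(s),W(s)\}$. Hence $V\ge g$ and $V\ge W$, with one of the two attained. Here $W$ is computed with $V$ itself in the expectations \eqref{eq:qprobe} and in the waiting value. This is exactly the ``supermartingale'' (stability under continuation) property: for every admissible continuation action, $V(s)$ is at least the cost-adjusted expected value of $V$ at the successor.

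\textbf{Step 2 (minimality).} Let $U$ be any bounded function with $U\ge g$ that is stable under continuation, meaning $U(s)\ge -c_p+q_{ij}\E[U(s^+)]+(1-q_{ij})\E[U(s^{\mathrm{drop}})]$ and the analogous inequality for \textsc{Wait}. I will prove $U\ge V$ by induction on $k(s)$.
\begin{itemize}
\item If $k(s)=0$, then $V(s)=g(s)\le U(s)$.
\item If $k(s)\ge1$, every successor has a strictly smaller potential, so the inductive hypothesis gives $U\ge V$ there. The action values are monotone in the continuation function, since they are positive combinations of expectations minus constants. Therefore $W(s)$ computed with $V$ is at most the corresponding quantity computed with $U$, which is at most $U(s)$. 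Combined with $g\le U$, this yields $V(s)\le U(s)$.
\end{itemize}
This identifies $V$ as the smallest such function, i.e.\ the finite-horizon Snell envelope of the reward process $g(s_n)$, with continuation costs absorbed into the one-step operator.

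\textbf{Step 3 (stopping rule).} Consider the policy that stops at the first stage with $g(s)\ge W(s)$ and otherwise plays a maximiser of $W$. At each state it selects an action attaining the maximum in \eqref{eq:bellman}, because $V=\max\{g,W\}$. By the policy-evaluation part of \Cref{thm:existence}, its value equals $V$, so it is optimal. Since \Cref{prop:horizon} bounds the number of continuation steps, stopping occurs by stage $\bar n$; at a state with $k(s)=0$ we have $W=-\infty$, or the set of continuation actions is empty, so $g\ge W$ holds automatically. Setting $\rho(s):=W(s)$ gives the threshold form \eqref{eq:reservation}. I will also note that ties are broken in favour of stopping, which is also optimal.

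\textbf{Step 4 (dependence on the current peer).} This is the main obstacle, because the claim that $\rho$ depends on the current peer ``only through the continuation'' is informal. My plan is to make it precise as follows. The terminal reward $Q_E(s,j)$ enters $g$ directly, whereas $\rho(s)=W(s)$ involves peer $j$'s statistics only via the successor states $s^+$ and $s^{\mathrm{drop}}$ in $Q_P(s,j)$ and via $V$ evaluated there. So $\rho$ is a functional of the continuation law alone and not of the immediate offer $Q_E(s,j)$ as a separate argument. If a sharper monotone statement is wanted, I would use the monotonicity from Step 2 to show that raising the offer increases $g$ at least as fast as it raises $\rho$, but I would stop at the structural statement above.
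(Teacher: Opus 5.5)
Your proposal is correct and follows the paper's proof essentially step for step. It reads $V=\max\{g,W\}$ off the Bellman equation, proves minimality of $V$ among bounded continuation-stable majorants of $g$ by induction on the remaining-continuation potential using monotonicity of the continuation operator, and derives the threshold rule from $g\ge W\iff V=g$. Your explicit policy-evaluation check in Step 3 and your structural reading in Step 4 of ``depends on the current peer only through the continuation'' are slightly more careful than the paper, which leaves that last clause informal.
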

\begin{proof}
By \Cref{thm:existence}, $V=\max\{g,W\}$; hence $V\ge g$, $V\ge W$, and $V$ equals
one of them. Minimality: any bounded $U$ with $U\ge g$ and
$U\ge$ (continuation value computed from $U$) satisfies $U\ge V^{(k)}$ for all $k$
by induction on $k$ (base case $U\ge g=V^{(0)}$; step: $U\ge\max\{g,\text{cont}(U)\}\ge\max\{g,\text{cont}(V^{(k-1)})\}=V^{(k)}$ using monotonicity of the continuation operator in its argument), so $U\ge V$. This is the defining
property of the Snell envelope. Since $V=\max\{g,W\}$, we have $g\ge W\iff
V=g$, i.e.\ stopping is optimal exactly when $g(s)\ge\rho(s)$ with $\rho:=W$.
\end{proof}

The reservation-value view organises the rest of the paper: \Cref{sec:certify}
develops confidence-safe surrogates for the terminal value $g$; \Cref{sec:voi}
and \Cref{sec:wait} develop tractable surrogates and closed forms for the two
components of $\rho=W$ (probing and waiting); and \Cref{sec:myopic} shows when the
resulting one-step rule coincides with the optimal threshold. We record a
stability property used repeatedly.

\begin{lemma}[Monotonicity and $1$-Lipschitzness of the value]
\label{lem:lip}
Fix the continuation dynamics. Then $V$ is non-decreasing in the best immediate
stopping value $g$, and for two states differing only in their terminal values
$g,g'$, $\lvert V(g,\cdot)-V(g',\cdot)\rvert\le\lvert g-g'\rvert$. Moreover
$v_{ij}$ in \eqref{eq:realvalue} is non-decreasing in both $S_{ij}(\tau_e)$ and
$\mu_{ij}$.
\end{lemma}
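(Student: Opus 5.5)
The argument rests on the decomposition $V=\max\{g,W\}$ from \Cref{thm:existence} and \Cref{prop:snell}, combined with the elementary inequality $\lvert\max\{a,b\}-\max\{a',b\}\rvert\le\lvert a-a'\rvert$. There is one subtlety: if two states differ only in their terminal values, this changes $g$ at the current stage but may also change terminal values at successor states. I therefore read ``fix the continuation dynamics'' in one of two ways and treat both.

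\emph{Narrow reading.} Here the perturbation affects only the current-stage terminal payoff, so $W(s)$ is literally unchanged.

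\emph{Broad reading.} Here the terminal reward function $g$ is perturbed everywhere, with $\sup\lvert g-g'\rvert\le\varepsilon$, and the transition kernel is held fixed.

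For the narrow reading, write $V(g,\cdot)=\max\{g,W\}$ with $W$ fixed. The map $x\mapsto\max\{x,W\}$ is non-decreasing and $1$-Lipschitz, which gives both claims at once.

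For the broad reading, I would run an induction over the backward recursion $V^{(k)}$ from the proof of \Cref{thm:existence}.

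\begin{itemize}
\item \emph{Base case.} At $k=0$ we have $V^{(0)}=g$, so monotonicity is immediate and $\lvert V^{(0)}-V'^{(0)}\rvert\le\varepsilon$.
\item \emph{Continuation values are monotone and nonexpansive.} $Q_P$ in \eqref{eq:qprobe} is $-c_p$ plus a convex combination, with weights $q_{ij}$ and $1-q_{ij}$, of expectations of $V^{(k-1)}$ under a fixed kernel. Expectation and convex combination are monotone and nonexpansive in sup-norm, and a constant shift does not change this. I expect $Q_W$ to have the same form, namely a cost plus an expectation of the successor value under the fixed arrival kernel. From the induction hypothesis this yields $\lvert W^{(k)}-W'^{(k)}\rvert\le\varepsilon$, with $W^{(k)}\le W'^{(k)}$ whenever $g\le g'$.
\item \emph{Induction step.} The maximum over finitely many actions is monotone and satisfies $\lvert\max_a x_a-\max_a y_a\rvert\le\max_a\lvert x_a-y_a\rvert$. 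Applying this to $\max\{g,W^{(k)}\}$ closes the induction, and setting $k=k(s)$ gives the statement.
\end{itemize}

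The claim about $v_{ij}$ is direct. In \eqref{eq:realvalue} we have $\partial v_{ij}/\partial S=\mu_{ij}\ge0$ and $\partial v_{ij}/\partial\mu=S_{ij}\ge0$, because both quantities lie in $[0,1]$ by \Cref{ass:bounded}. For a general coordinatewise non-decreasing value functional, the claim holds by assumption.

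The main obstacle is pinning down the form of $Q_W$, which is deferred to \Cref{sec:wait}. The argument needs $Q_W$ to be an expectation of $V$ at successor states under a kernel that does not depend on $g$; it must not depend on $g$ in some other nonlinear way. If it is an additive cost plus such an expectation, possibly discounted by a factor at most $1$, nonexpansiveness holds. If the waiting value has a closed form involving the mark distribution directly, I would instead verify that it is monotone and $1$-Lipschitz in the terminal values entering it.
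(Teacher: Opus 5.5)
Your narrow-reading argument is exactly the paper's proof. The paper writes $V=\max\{g,W\}$ with $W$ independent of the current $g$, uses that $x\mapsto\max\{x,c\}$ is non-decreasing and $1$-Lipschitz, and gets the last claim from coordinatewise monotonicity of $S\mu$ on $[0,1]^2$ since $S,\mu\ge0$.

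Your broad-reading induction is a correct extra that the paper does not need. Your worry about $Q_W$ also resolves: \eqref{eq:wait} equals $\E[\max\{v_0,M_\Delta\}]-c_w\Delta$, and its integrand lies in $[0,1]$, so it is non-decreasing and $1$-Lipschitz in $v_0$.
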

\begin{proof}
$V=\max\{g,W\}$ with $W$ independent of the current $g$; $x\mapsto\max\{x,c\}$ is
non-decreasing and $1$-Lipschitz, giving the first two claims. The last is
immediate since $S,\mu\ge0$ and the product is coordinatewise non-decreasing on
$[0,1]^2$.
\end{proof}

The next two results record the convexity and threshold structure of the value in
the immediate reward, and the option value of a longer decision horizon; both are
used when reasoning about when stopping becomes optimal.

\begin{proposition}[Convexity and one-sided threshold structure]
\label{prop:threshold}
Fix the continuation dynamics, so that the continuation value $W$ does not depend
on the current best immediate value $g$. Then $g\mapsto V(g,\cdot)=\max\{g,W\}$ is
convex, non-decreasing, and $1$-Lipschitz, and the optimal stopping set in the
$g$-coordinate is the upward-closed half-line $\{g:g\ge\rho\}$ with threshold
$\rho=W$. Consequently, if it is optimal to take a terminal action at some
immediate value $g_0$, it is optimal at every $g\ge g_0$: the optimal policy is a
\emph{monotone} (threshold) rule and never re-enters the continuation region as
the certified value improves.
\end{proposition}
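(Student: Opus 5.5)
The plan is to treat the continuation value $W$ as a fixed real constant $\rho$, which the hypothesis of \Cref{prop:threshold} permits because the continuation dynamics are frozen. Then $V(g,\cdot)=\max\{g,\rho\}$ follows from \Cref{thm:existence} and \eqref{eq:gW}. Every claimed property then reduces to an elementary fact about the scalar map $\phi(x)=\max\{x,\rho\}$.

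\textbf{Step 1 (convexity and monotonicity).} A pointwise maximum of two affine functions of $g$ is convex; here these are the identity $g\mapsto g$ and the constant $g\mapsto\rho$. The same map is non-decreasing, since both pieces are non-decreasing. The $1$-Lipschitz property was already recorded in \Cref{lem:lip}. Alternatively, one checks directly that $\lvert\max\{g,\rho\}-\max\{g',\rho\}\rvert\le\lvert g-g'\rvert$ by splitting into three cases: both values are above $\rho$, both are below, or one lies on each side.

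\textbf{Step 2 (threshold set).} By \Cref{prop:snell}, stopping is optimal exactly when $g\ge\rho(s)=W(s)$. With $W$ held fixed, the stopping set in the $g$-coordinate is therefore $\{g:g\ge\rho\}$, an upward-closed half-line. The monotone conclusion is then immediate: if $g_0\ge\rho$ and $g\ge g_0$, then $g\ge\rho$.

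\textbf{Step 3 (no re-entry).} The phrase ``never re-enters the continuation region as the certified value improves'' needs care. It must be read within the fixed-continuation comparison, meaning the comparison across states that share $W$ and differ only in $g$. It does not cover a claim about the trajectory, since along a real trajectory $W$ changes with time. The main obstacle is making this scope explicit and justifying that $W$ is genuinely independent of the current $g$. The point is that $W$ is defined in \eqref{eq:gW} from $Q_P$ and $Q_W$, which depend only on successor states. Varying the current immediate value while holding the successor-state distributions fixed therefore leaves $W$ unchanged, which is exactly the stated hypothesis. I would state the monotonicity conclusion relative to that hypothesis and note that under ties ($g=\rho$) stopping is (weakly) optimal, consistent with the convention in \eqref{eq:reservation}.
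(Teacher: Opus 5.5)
Your proposal is correct and follows the paper's proof essentially step for step. Convexity comes from $\max\{g,W\}$ being a maximum of two affine maps in $g$, monotonicity and $1$-Lipschitzness come from \Cref{lem:lip}, and the upward-closed stopping set $\{g: g\ge W\}$ is read off from \Cref{prop:snell}. Your Step 3 scoping remark is a fair sharpening: the paper's ``upward closure is exactly the stated monotonicity'' also only delivers a comparison across states sharing the same $W$, not a trajectory-wise no-re-entry claim. Note, though, that independence of $W$ from $g$ is simply the proposition's hypothesis and needs no separate justification.
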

\begin{proof}
$g\mapsto\max\{g,W\}$ is a maximum of two affine functions of $g$, hence convex;
it is non-decreasing and $1$-Lipschitz as in \Cref{lem:lip}. Stopping is optimal
iff $g\ge W$ (\Cref{prop:snell}), and $\{g:g\ge W\}$ is upward closed with
threshold $\rho=W$; upward closure is exactly the stated monotonicity.
\end{proof}

\begin{proposition}[Option value of the decision horizon]
\label{prop:horizonmono}
Let $V^{(k)}$ denote the optimal value with at most $k$ remaining continuation
steps (equivalently, deadline permitting $k$ further probes/waits). Then
$V^{(k)}(s)\ge V^{(k-1)}(s)$ for every $s$ and $k\ge1$; that is, the optimal value
is non-decreasing in the horizon. In particular a later deadline $H$ never lowers
the attainable value.
\end{proposition}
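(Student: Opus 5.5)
The plan is to argue by induction on $k$, using the backward-induction recursion from the proof of \Cref{thm:existence}. I interpret $V^{(k)}$ as the value of the same state when at most $k$ continuation actions remain admissible. When $k$ remains, all terminal options are still available, but the continuation options are evaluated against $V^{(k-1)}$ at successor states. I write $\mathcal T_k$ for the one-step operator
\[
(\mathcal T U)(s)=\max\Bigl\{g(s),\ \max_j Q_P^{U}(s,j),\ Q_W^{U}(s)\Bigr\},
\]
where $Q_P^U$ and $Q_W^U$ are given by \eqref{eq:qprobe} and \eqref{eq:wait} with $U$ in place of $V$. Then $V^{(k)}=\mathcal T V^{(k-1)}$, with $V^{(0)}=g$.

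The first step is monotonicity of $\mathcal T$: if $U\ge U'$ pointwise, then $\mathcal T U\ge\mathcal T U'$. This holds because $Q_P^U$ is $-c_p$ plus a convex combination, with weights $q_{ij}$ and $1-q_{ij}$, of expectations of $U$. Expectation preserves order and the weights are nonnegative. The same argument applies to the waiting value, which is a cost plus an expectation of $U$ over the arrival or successor state. Finally, the maximum over a fixed set of terms is monotone. This is the same monotonicity already invoked in the minimality step of \Cref{prop:snell}.

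The base case is $k=1$. We have $V^{(1)}(s)=\max\{g(s),\text{cont}(V^{(0)})(s)\}\ge g(s)=V^{(0)}(s)$, since the terminal term $g(s)$ appears inside the maximum. For the inductive step, assume $V^{(k-1)}\ge V^{(k-2)}$ pointwise. Applying $\mathcal T$ and using monotonicity gives $V^{(k)}=\mathcal T V^{(k-1)}\ge \mathcal T V^{(k-2)}=V^{(k-1)}$. Equivalently, at the policy level, any policy feasible with $k-1$ continuation steps is also feasible with $k$ steps, because it can simply never use the extra step. The optimum over the larger policy class is therefore no smaller. I would mention this interpretation as a sanity check.

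For the deadline statement, a larger $H$ raises $\bar n=\lfloor H/\min(\tau_p,\Delta_w)\rfloor$ and so enlarges the admissible continuation budget $k(s)$. The claim then follows from the chain $V^{(k)}\ge V^{(k-1)}$.

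The main obstacle is making sure the \emph{only} effect of a longer horizon is that more continuation is allowed. The terminal rewards $g(s)$ and the transition laws (survival $q_{ij}$, drift, arrival distributions) at a given state must not themselves depend on $k$ or $H$. I would state this as the standing convention that the state $s$ already includes the elapsed time $t_n$, so that $H$ enters only through admissibility. If $Q_W$ somehow depends on $H$ through a residual-horizon term, I would note that \eqref{eq:wait} is evaluated against $V^{(k-1)}$, and that monotonicity of $\mathcal T$ still suffices, provided that term is itself non-decreasing in the remaining horizon.
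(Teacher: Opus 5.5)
Your proposal is correct and follows the paper's route. Both argue by induction on $k$, with base case $V^{(1)}=\max\{g,\mathrm{cont}(V^{(0)})\}\ge g=V^{(0)}$, an inductive step driven by monotonicity of the continuation operator, and the deadline claim read off from the enlarged set of admissible continuation steps. Your explicit convention that $g$ and the transition laws do not depend on $k$ or $H$, together with the policy-inclusion check, makes precise what the paper leaves implicit in its final sentence.
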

\begin{proof}
By backward induction on $k$. For $k=1$, $V^{(1)}(s)=\max\{g(s),\mathrm{cont}(V^{(0)})(s)\}\ge g(s)=V^{(0)}(s)$, where $\mathrm{cont}(\cdot)$ is the
continuation operator (the finite expectations in \eqref{eq:qprobe} and
\eqref{eq:wait}), which is monotone in its argument because it is a non-negative
mixture of state values minus a fixed cost. Assume $V^{(k-1)}\ge V^{(k-2)}$
pointwise. Monotonicity of $\mathrm{cont}$ gives
$\mathrm{cont}(V^{(k-1)})\ge\mathrm{cont}(V^{(k-2)})$, hence
$V^{(k)}=\max\{g,\mathrm{cont}(V^{(k-1)})\}\ge\max\{g,\mathrm{cont}(V^{(k-2)})\}=V^{(k-1)}$.
Since additional wall-clock time only enlarges the set of admissible continuation
steps, the deadline statement follows.
\end{proof}

% ==================================================================== section 5
\section{Confidence-Safe Certification}
\label{sec:certify}

Exact evaluation of the terminal value $g$ is impossible because $\mu_{ij}(t)$ is
unknown and estimated from a few, ageing probes. We build high-probability
bounds on $\mu_{ij}(t)$, propagate them to realisable value, and derive a
certification rule that commits to a peer only when its realisable value is
provably dominant.

\subsection{Drift-aware confidence bounds}
Let probe $r$ for $(i,j)$ be observed at time $t_{ij,r}$ with score $X_{ij,r}$,
and let its age at decision time $t$ be $a_{ij,r}=t-t_{ij,r}$. Define
\begin{equation}
\widehat\mu_{ij,n}=\frac1n\sum_{r=1}^{n}X_{ij,r},\qquad
\bar a_{ij,n}=\frac1n\sum_{r=1}^{n}a_{ij,r},\qquad
\bar\mu_{ij,n}=\frac1n\sum_{r=1}^{n}\mu_{ij}(t_{ij,r}).
\label{eq:evidence}
\end{equation}
Then $\E[\widehat\mu_{ij,n}]=\bar\mu_{ij,n}$: the empirical mean estimates utility
\emph{at the probe times}, not the current utility $\mu_{ij}(t)$.

\begin{assumption}[Bounded sequential evidence]
\label{ass:seq}
For each active pair $(i,j)$, probe outcomes are conditionally independent given
their observation times and satisfy $X_{ij,r}\in[0,1]$ with mean
$\mu_{ij}(t_{ij,r})$.
\end{assumption}

\begin{assumption}[Local utility drift]
\label{ass:drift}
Over one evidence-retention horizon,
$\lvert\mu_{ij}(t)-\mu_{ij}(t')\rvert\le\nu_{ij}\lvert t-t'\rvert$ for a finite
drift envelope $\nu_{ij}\ge0$.
\end{assumption}

\begin{lemma}[Finite-horizon stage-uniform concentration]
\label{lem:uniform}
Fix a receiver that tracks at most $K$ distinct peer identities during a finite
decision episode and at most $n_{\max}$ probes per peer. Under \Cref{ass:seq},
with probability at least $1-\delta$,
\begin{equation}
\lvert\widehat\mu_{ij,n}-\bar\mu_{ij,n}\rvert\le r_n(\delta):=\sqrt{\frac{\log(2Kn_{\max}/\delta)}{2n}}
\label{eq:radius}
\end{equation}
simultaneously for every tracked peer $j$ and every $1\le n\le n_{\max}$.
\end{lemma}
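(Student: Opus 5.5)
The argument applies Hoeffding's inequality to each fixed (peer, sample-size) pair and then takes a union bound over the finitely many such pairs.

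\textbf{Fixed pair.} Fix a tracked peer $j$ and a sample size $n\in\{1,\ldots,n_{\max}\}$, and condition on the observation times $t_{ij,1},\ldots,t_{ij,n}$. Under \Cref{ass:seq} the scores $X_{ij,1},\ldots,X_{ij,n}$ are then independent, take values in $[0,1]$, and have means $\mu_{ij}(t_{ij,r})$. Their average therefore has conditional mean $\bar\mu_{ij,n}$. Hoeffding's inequality for independent bounded variables with non-identical means \citep{hoeffding1963} gives
\[
\Prb\bigl(\lvert\widehat\mu_{ij,n}-\bar\mu_{ij,n}\rvert>\epsilon\,\big|\,t_{ij,1:n}\bigr)\le 2\exp(-2n\epsilon^2).
\]
Choosing $\epsilon=r_n(\delta)$ makes the right-hand side exactly $\delta/(Kn_{\max})$. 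This bound does not depend on the conditioning times, so integrating over them yields the same unconditional bound.

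\textbf{Union bound.} The simultaneous event ranges over at most $K$ peer identities and $n_{\max}$ sample sizes, hence over at most $Kn_{\max}$ pairs $(j,n)$. A union bound over these pairs gives total failure probability at most $\delta$, which is the claim.

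\textbf{Main obstacle.} The delicate step is justifying the fixed-$n$ Hoeffding bound when probes are chosen adaptively. Both the number of probes and the probe times depend on past observations, so the first $n$ scores are not an i.i.d.\ sample of predetermined size. I would resolve this with a ``stack of rewards'' construction. For each peer, imagine the sequence of potential $r$-th probe outcomes, indexed by probe count $r$ rather than by wall-clock stage. The statement at index $n$ concerns only the first $n$ entries of this sequence. Conditional independence given the observation times (\Cref{ass:seq}) still holds when the times are adaptively chosen.

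The rigorous version of this argument uses a martingale rather than independence. Define
\[
M_n=\sum_{r=1}^{n}\bigl(X_{ij,r}-\mu_{ij}(t_{ij,r})\bigr).
\]
Each summand is a bounded martingale difference with range $1$, so Azuma--Hoeffding gives the same bound $2\exp(-2n\epsilon^2)$ for each fixed $n$. Peers that are never probed $n$ times contribute nothing to the union, and this only lowers the failure probability.
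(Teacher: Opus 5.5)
Your argument is correct and is essentially the paper's proof: you apply Hoeffding's inequality to each fixed (peer, sample-size) pair, with $r_n(\delta)$ chosen so the two-sided tail equals $\delta/(Kn_{\max})$, and then take a union bound over the at most $Kn_{\max}$ pairs. Your Azuma--Hoeffding treatment of adaptively timed probes is a sound refinement that the paper leaves implicit: the increments lie in a predictable interval of length one and are zeroed once the peer is no longer probed, and this is the version that genuinely survives adaptivity, since conditioning on adaptively chosen observation times is justified only because \Cref{ass:seq} posits it.
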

\begin{proof}
For fixed $(j,n)$, Hoeffding's inequality \citep{hoeffding1963} gives
$\Prb(\lvert\widehat\mu_{ij,n}-\bar\mu_{ij,n}\rvert>r_n)\le
2e^{-2nr_n^2}$. Choosing $r_n$ so that this equals $\delta/(Kn_{\max})$ yields
\eqref{eq:radius}. A union bound over the at most $Kn_{\max}$ peer--stage pairs
gives the simultaneous statement.
\end{proof}

Because the band holds simultaneously for all $1\le n\le n_{\max}$, it holds in
particular at any \emph{data-dependent} probe count: the number of probes PROSE
takes of a peer is a stopping time adapted to $\{\mathcal F_n\}$, and the uniform
guarantee is exactly what licenses its use at such an adaptively chosen $n$.

\begin{lemma}[Validity under stale evidence]
\label{lem:stale}
Under \Cref{ass:drift}, on the event of \Cref{lem:uniform} the current utility
satisfies $\mu_{ij}(t)\in[L_{ij,n}(t),U_{ij,n}(t)]$ for every tracked
peer--stage pair, where
\begin{align}
L_{ij,n}(t)&=\clip\bigl(\widehat\mu_{ij,n}-r_n-\nu_{ij}\bar a_{ij,n},\,0,\,1\bigr),
\label{eq:driftlower}\\
U_{ij,n}(t)&=\clip\bigl(\widehat\mu_{ij,n}+r_n+\nu_{ij}\bar a_{ij,n},\,0,\,1\bigr).
\label{eq:driftci}
\end{align}
\end{lemma}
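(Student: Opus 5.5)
The argument is a triangle inequality: it splits the error between the current utility $\mu_{ij}(t)$ and the empirical mean $\widehat\mu_{ij,n}$ into a sampling part and a staleness part. Fix a tracked peer $j$ and a stage $1\le n\le n_{\max}$, and work on the event $\mathcal G$ of \Cref{lem:uniform}, which has probability at least $1-\delta$. Everything below is deterministic on $\mathcal G$, so no further union bound is needed. Write
\[
\mu_{ij}(t)-\widehat\mu_{ij,n}=\bigl(\mu_{ij}(t)-\bar\mu_{ij,n}\bigr)+\bigl(\bar\mu_{ij,n}-\widehat\mu_{ij,n}\bigr).
\]
On $\mathcal G$, the second bracket is at most $r_n$ in absolute value by \eqref{eq:radius}.

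For the first bracket, use the definition of $\bar\mu_{ij,n}$ in \eqref{eq:evidence} as the average of the $\mu_{ij}(t_{ij,r})$. Then
\[
\lvert\mu_{ij}(t)-\bar\mu_{ij,n}\rvert\le\frac1n\sum_{r=1}^n\lvert\mu_{ij}(t)-\mu_{ij}(t_{ij,r})\rvert\le\frac1n\sum_{r=1}^n\nu_{ij}\lvert t-t_{ij,r}\rvert=\nu_{ij}\bar a_{ij,n}.
\]
The second step is \Cref{ass:drift}. The final equality uses $a_{ij,r}=t-t_{ij,r}\ge0$, since probes precede the decision time. Combining the two brackets gives
\[
\widehat\mu_{ij,n}-r_n-\nu_{ij}\bar a_{ij,n}\le\mu_{ij}(t)\le\widehat\mu_{ij,n}+r_n+\nu_{ij}\bar a_{ij,n}.
\]

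The last step is clipping. Since $X_{ij,r}\in[0,1]$ under \Cref{ass:seq}, we have $\mu_{ij}(t)\in[0,1]$. If $x\le\mu\le y$ and $\mu\in[0,1]$, then $\clip(x,0,1)\le\mu\le\clip(y,0,1)$, because $\max(x,0)\le\mu$ and $\min(y,1)\ge\mu$; the clip truncations of the other sides are harmless. This yields $\mu_{ij}(t)\in[L_{ij,n}(t),U_{ij,n}(t)]$ for all tracked pairs simultaneously on $\mathcal G$.

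The one point needing care is that \Cref{ass:drift} holds only over an evidence-retention horizon. I would therefore state explicitly that all retained probe times $t_{ij,r}$ and the decision time $t$ lie within one such horizon. This holds for a finite episode with horizon at least $H$, or by discarding probes older than the horizon. The adaptive choice of $n$ is already covered by the stage-uniformity of \Cref{lem:uniform}.
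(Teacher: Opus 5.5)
Your proposal is correct and follows the paper's proof step for step: you average the drift bound to get $\lvert\mu_{ij}(t)-\bar\mu_{ij,n}\rvert\le\nu_{ij}\bar a_{ij,n}$, combine it with the band of Lemma~\ref{lem:uniform} by the triangle inequality, and then intersect with $[0,1]$. Your explicit check of the clipping step and your remark that the decision time and all retained probe times must lie within one retention horizon are details the paper leaves implicit.
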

\begin{proof}
By \Cref{ass:drift}, $\lvert\mu_{ij}(t)-\mu_{ij}(t_{ij,r})\rvert\le\nu_{ij}a_{ij,r}$;
averaging over $r$ gives
$\lvert\mu_{ij}(t)-\bar\mu_{ij,n}\rvert\le\nu_{ij}\bar a_{ij,n}$. Combining with
\Cref{lem:uniform} by the triangle inequality,
$\lvert\mu_{ij}(t)-\widehat\mu_{ij,n}\rvert\le r_n+\nu_{ij}\bar a_{ij,n}$, and
intersecting with $[0,1]$ (since $\mu_{ij}\in[0,1]$) gives
\eqref{eq:driftlower}--\eqref{eq:driftci}.
\end{proof}

The interval width has two sources: $r_n$ shrinks with additional probes, whereas
$\nu_{ij}\bar a_{ij,n}$ grows as evidence ages. A newly encountered peer
($n_{ij}=0$) uses the vacuous bounded-score interval $[0,1]$ until its first probe
completes. \Cref{fig:certify}(left) decomposes one interval. The following lemma
makes the two-source tension precise and is used later to establish diminishing
returns of probing.

\begin{lemma}[Sampling contraction versus staleness inflation]
\label{lem:width}
Fix $\delta,K,n_{\max}$. The sampling radius $r_n(\delta)$ in \eqref{eq:radius} is
strictly decreasing in $n$ with $r_n(\delta)=\Theta(n^{-1/2})$. Consequently, in
the drift-free case $\nu_{ij}=0$ the compatibility intervals are nested,
$[L_{ij,n+1},U_{ij,n+1}]\subseteq[L_{ij,n},U_{ij,n}]$ up to re-centring, and their
width $2r_n$ contracts monotonically to $0$. With drift, the width
$w_{ij,n}=r_n+\nu_{ij}\bar a_{ij,n}$ is a sum of a strictly decreasing term and a
term non-decreasing in the mean evidence age; if the age grows without bound the
width is eventually increasing, so a finite retention horizon is necessary for the
width to remain controlled.
\end{lemma}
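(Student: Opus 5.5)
The plan is to treat the four claims of \Cref{lem:width} separately. Each follows from elementary calculus on the explicit radius \eqref{eq:radius} together with the interval construction of \Cref{lem:stale}.

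\emph{Monotonicity and rate of $r_n$.} Write $r_n(\delta)=\sqrt{C/(2n)}$ with $C:=\log(2Kn_{\max}/\delta)$. Here $C$ does not depend on $n$, because the union bound in \Cref{lem:uniform} was taken over the fixed set of $Kn_{\max}$ peer--stage pairs. Since $\delta<1$ and $Kn_{\max}\ge1$, we have $C>\log 2>0$. The map $n\mapsto\sqrt{C/(2n)}$ is therefore strictly decreasing, and it equals $\sqrt{C/2}\,n^{-1/2}$ exactly. This gives $r_n=\Theta(n^{-1/2})$ with constants depending only on $(\delta,K,n_{\max})$. This step is routine.

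\emph{Drift-free nesting.} With $\nu_{ij}=0$, \Cref{lem:stale} gives intervals $[\widehat\mu_{ij,n}\mp r_n]$ clipped to $[0,1]$. Their unclipped width is $2r_n$, which by the first step decreases strictly to zero. Clipping to $[0,1]$ can only shrink the width. The phrase ``nested up to re-centring'' will be made precise as follows. Translating the $(n+1)$-st interval so that its centre coincides with $\widehat\mu_{ij,n}$ places it inside the $n$-th interval, because $r_{n+1}<r_n$. I will also note what is true without re-centring: on the uniform event both intervals contain $\mu_{ij}$, so their intersection is always nonempty. I will not claim literal nesting, since the empirical centre can move by up to $1/(n+1)$. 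This is the one place where care is needed, and I will state explicitly that the nesting is of translated intervals, or equivalently a monotone contraction of widths.

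\emph{With drift.} The half-width is $w_{ij,n}=r_n+\nu_{ij}\bar a_{ij,n}$. The first term is strictly decreasing by the first step. The second term is $\nu_{ij}\ge0$ times the mean age, so it is non-decreasing in $\bar a_{ij,n}$. For the eventual-increase claim, consider a sequence of decision times along which $\bar a_{ij,n}\to\infty$, with $n$ capped at $n_{\max}$ by \Cref{prop:horizon}. Then $r_n\ge r_{n_{\max}}>0$ stays bounded, while $\nu_{ij}\bar a_{ij,n}\to\infty$ whenever $\nu_{ij}>0$. Hence the width diverges, and since it grows with age at rate $\nu_{ij}$ once $n$ is frozen, it is eventually increasing. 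It follows that the width stays bounded only if the ages are bounded, which means a finite retention horizon is necessary. The caveat $\nu_{ij}>0$ will be stated explicitly, because for $\nu_{ij}=0$ the claim reduces to the previous step.
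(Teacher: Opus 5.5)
Your proposal is correct and follows the paper's argument essentially step for step: the explicit form $r_n=\sqrt{c/n}$, a half-width of exactly $r_n$ in the drift-free case, and divergence of $\nu_{ij}\bar a_{ij,n}$ when ages are unbounded. Your refinements tighten points the paper leaves informal: the caveat $\nu_{ij}>0$; the observation that $n\le n_{\max}$ keeps $r_n\ge r_{n_{\max}}$ rather than vanishing, so that once probing stops the width grows linearly in age (which genuinely yields ``eventually increasing'' rather than mere divergence); and the precise reading of ``nested up to re-centring'', which is clean for unclipped intervals (clipping can break literal nesting near the boundary of $[0,1]$).
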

\begin{proof}
Write $r_n(\delta)=\sqrt{c/n}$ with $c=\tfrac12\log(2Kn_{\max}/\delta)>0$ fixed;
$n\mapsto\sqrt{c/n}$ is strictly decreasing and $\Theta(n^{-1/2})$, and tends to
$0$. In the drift-free case the half-width is exactly $r_n$, so widths contract
and successive intervals nest up to the shift of centre $\widehat\mu_{ij,n}$. In
general $w_{ij,n}=r_n+\nu_{ij}\bar a_{ij,n}$; the first term decreases in $n$ and
the second is non-decreasing in $\bar a_{ij,n}$, giving the stated tension. If
$\bar a_{ij,n}\to\infty$ then $\nu_{ij}\bar a_{ij,n}\to\infty$ dominates the
vanishing $r_n$, so $w_{ij,n}\to\infty$; hence some finite retention horizon is
required.
\end{proof}

\subsection{Mobility-adjusted value bounds and certification}
The interval above concerns statistical compatibility. What matters is realisable
value \eqref{eq:realvalue}. Suppose survival intervals are available:
$S_{ij}(\tau_e)\in[\underline S_{ij}(\tau_e),\overline S_{ij}(\tau_e)]$.

\begin{assumption}[Survival-interval coverage]
\label{ass:surv}
With probability at least $1-\delta_S$, simultaneously for all tracked peers,
$S_{ij}(\tau_e)\in[\underline S_{ij}(\tau_e),\overline S_{ij}(\tau_e)]$. (With
point survival estimates one sets $\underline S=\overline S=\widehat S$; the
high-probability certification statement then holds conditionally on the survival
estimates being exact.)
\end{assumption}

Because both factors are non-negative and $S\mu$ is coordinatewise monotone on
$[0,1]^2$, interval propagation gives realisable-value bounds
\begin{equation}
\underline V_{ij,n}=\underline S_{ij}(\tau_e)\,L_{ij,n}-c_e,\qquad
\overline V_{ij,n}=\overline S_{ij}(\tau_e)\,U_{ij,n}-c_e.
\label{eq:valuebounds}
\end{equation}
Among reachable peers, PROSE selects as \emph{incumbent} the peer with the
largest \emph{worst-case} realisable value,
\begin{equation}
b=\argmax_{j\in\Nset_i(t)}\underline V_{ij,n},
\label{eq:incumbent}
\end{equation}
a maximin choice: it selects the peer with the largest value already guaranteed by
the current confidence set, not merely the largest estimated mean. Let $v_0$ be
the deterministic fallback (idle) value of \eqref{eq:bellman}. The
\emph{certification rule} accepts $b$ when
\begin{equation}
\underline V_{ib,n}>\max\Bigl\{v_0,\ \max_{k\neq b}\overline V_{ik,n}\Bigr\}.
\label{eq:certify}
\end{equation}

\begin{figure}[t]
  \centering
  \includegraphics[width=\linewidth]{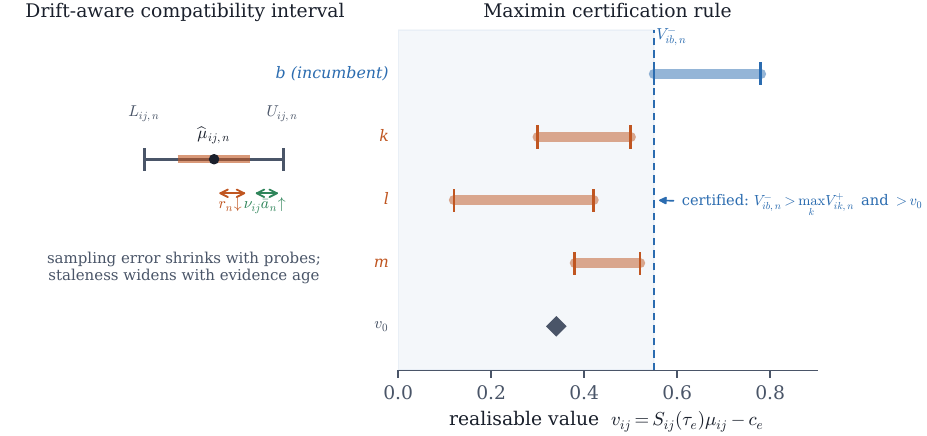}
  \caption{Confidence-safe certification. \emph{Left:} a drift-aware interval
  $[L_{ij,n},U_{ij,n}]$ centred at $\widehat\mu_{ij,n}$ with half-width
  $r_n+\nu_{ij}\bar a_n$; the sampling part shrinks with probes while the
  staleness part grows with age. \emph{Right:} realisable-value intervals across
  peers; the incumbent $b$ is certified because its lower bound
  $\underline V_{ib,n}$ clears the upper bound of every rival and the fallback
  $v_0$ (\Cref{eq:certify}).}
  \label{fig:certify}
\end{figure}

\begin{theorem}[Correct mobility-adjusted certification]
\label{thm:cert}
On the joint event of \Cref{lem:stale} and \Cref{ass:surv}, if peer $b$ satisfies
\eqref{eq:certify} then
\begin{equation}
v_{ib}>\max\Bigl\{v_0,\ \max_{k\neq b}v_{ik}\Bigr\}.
\end{equation}
Consequently certification is correct with probability at least
$1-\delta-\delta_S$. The conclusion concerns the current exchange/fallback set
only; it does not assert dominance over the waiting action, which is compared
separately through $\rho=W$ (\Cref{prop:snell}).
\end{theorem}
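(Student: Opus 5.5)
The argument is a deterministic sandwich argument carried out on the joint good event, followed by a union bound for the probability statement. Let $\mathcal G$ be the intersection of the event of \Cref{lem:uniform} (on which \Cref{lem:stale} applies) with the coverage event of \Cref{ass:surv}. On $\mathcal G$, for every tracked reachable peer $j$ we have $\mu_{ij}(t)\in[L_{ij,n},U_{ij,n}]$ and $S_{ij}(\tau_e)\in[\underline S_{ij}(\tau_e),\overline S_{ij}(\tau_e)]$. All four endpoints lie in $[0,1]$: the $L,U$ are clipped, and survival intervals can be intersected with $[0,1]$ without loss.

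First I will show that the interval propagation \eqref{eq:valuebounds} is valid pointwise on $\mathcal G$, that is,
\[
\underline V_{ij,n}\le v_{ij}\le\overline V_{ij,n}\qquad\text{for every } j\in\Nset_i(t).
\]
This uses the coordinatewise monotonicity of $(S,\mu)\mapsto S\mu-c_e$ on $[0,1]^2$ recorded in \Cref{lem:lip}. Applied twice,
\[
\underline S\,L\le S\,L\le S\,\mu\qquad\text{and}\qquad S\,\mu\le \overline S\,\mu\le\overline S\,U.
\]
Each step needs only non-negativity of the factor being held fixed. The same argument covers any coordinatewise non-decreasing value functional, as remarked after \eqref{eq:realvalue}.

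Next I chain the inequalities. If $b$ satisfies \eqref{eq:certify}, then for each $k\ne b$,
\[
v_{ib}\ge\underline V_{ib,n}>\overline V_{ik,n}\ge v_{ik},
\]
and also $v_{ib}\ge\underline V_{ib,n}>v_0$. The inequality is strict because \eqref{eq:certify} is strict. Taking the maximum over $k\neq b$ and $v_0$ (a finite set, since $\Nset_i(t)$ is finite) gives the displayed conclusion. The maximin choice of $b$ in \eqref{eq:incumbent} plays no role in correctness: any peer passing \eqref{eq:certify} is truly dominant on $\mathcal G$.

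For the probability statement, $\Prb(\mathcal G^c)\le\delta+\delta_S$ by the union bound over \Cref{lem:uniform} and \Cref{ass:surv}. Since the implication holds deterministically on $\mathcal G$, the event ``certification fires and is wrong'' is contained in $\mathcal G^c$.

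The main subtlety, rather than obstacle, is adaptivity. The probe counts $n$ at which certification is checked, and the identity of $b$, are data-dependent. Because the bands of \Cref{lem:uniform} hold simultaneously over all tracked peers and all $1\le n\le n_{\max}$, evaluating them at the adaptively chosen $n$ and $b$ costs nothing. I will state this explicitly, following the remark after \Cref{lem:uniform}. I will also handle two edge cases:
\begin{itemize}
\item Unprobed peers use $[0,1]$, which covers trivially.
\item With point survival estimates, the guarantee is conditional, as stated in \Cref{ass:surv}.
\end{itemize}
Finally, I note that the comparison involves only exchange values and $v_0$, not $Q_W$, so nothing about $\rho=W$ is asserted.
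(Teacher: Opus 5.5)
Your proposal is correct and follows the paper's proof essentially step for step. The steps are the same:
\begin{itemize}
\item interval propagation via coordinatewise monotonicity gives $v_{ij}\in[\underline V_{ij,n},\overline V_{ij,n}]$ on the joint event;
\item the chain $v_{ib}\ge\underline V_{ib,n}>\overline V_{ik,n}\ge v_{ik}$, together with $v_{ib}\ge\underline V_{ib,n}>v_0$, yields strict dominance;
\item a union bound gives probability at least $1-\delta-\delta_S$.
\end{itemize}
Your remarks on adaptively chosen probe counts and on unprobed peers (vacuous interval $[0,1]$) agree with the paper's discussion after \Cref{lem:uniform} and \Cref{lem:stale}; they only make explicit what the paper's proof leaves implicit.
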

\begin{proof}
By \Cref{lem:lip} and non-negativity, interval multiplication is monotone, so on
the joint event $v_{ij}\in[\underline V_{ij,n},\overline V_{ij,n}]$ for all $j$.
Then \eqref{eq:certify} gives, for every $k\neq b$,
$v_{ib}\ge\underline V_{ib,n}>\overline V_{ik,n}\ge v_{ik}$, and
$v_{ib}\ge\underline V_{ib,n}>v_0$. The probability bound is a union bound over
the evidence event ($\ge1-\delta$) and the survival event ($\ge1-\delta_S$).
\end{proof}

\begin{theorem}[Finite-sample current-peer identification]
\label{thm:samples}
Assume exact survival probabilities, no drift during one episode, equal probe
counts $n$, and a unique best current peer $b$ with mobility-adjusted gap
\begin{equation}
\Delta_v=v_{ib}-\max\Bigl\{v_0,\ \max_{k\neq b}v_{ik}\Bigr\}>0.
\end{equation}
Let $S_{\max}=\max_j S_{ij}(\tau_e)$. On the event of \Cref{lem:uniform}, a
sufficient condition for \eqref{eq:certify} to hold (so that $b$ is certified
against every current peer and the fallback) is
\begin{equation}
n>\frac{8S_{\max}^2}{\Delta_v^2}\,\log\frac{2Kn_{\max}}{\delta}.
\label{eq:samplebound}
\end{equation}
\end{theorem}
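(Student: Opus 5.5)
Under the stated simplifications (exact survival, so $\underline S_{ij}=\overline S_{ij}=S_{ij}(\tau_e)$; no drift, so $\nu_{ij}\bar a_{ij,n}$ plays no role and $\bar\mu_{ij,n}=\mu_{ij}$; equal probe counts $n$), the value intervals \eqref{eq:valuebounds} become $\underline V_{ij,n}=S_{ij}(\tau_e)L_{ij,n}-c_e$ and $\overline V_{ij,n}=S_{ij}(\tau_e)U_{ij,n}-c_e$. The plan is to show that on the event of \Cref{lem:uniform} every value interval contains the true $v_{ij}$ and has half-width at most $S_{\max}r_n$. Once that holds, the gap condition $\Delta_v>2S_{\max}r_n$ forces \eqref{eq:certify}. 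The last step is to solve $\Delta_v>2S_{\max}r_n$ for $n$ using the explicit form of $r_n(\delta)$.

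\emph{Interval width.} On the event of \Cref{lem:uniform}, $\lvert\widehat\mu_{ij,n}-\mu_{ij}\rvert\le r_n$. The unclipped interval is $[\widehat\mu_{ij,n}-r_n,\widehat\mu_{ij,n}+r_n]$, and clipping to $[0,1]$ only moves $L$ up and $U$ down. It never moves them past $\mu_{ij}\in[0,1]$, so the inclusion $L_{ij,n}\le\mu_{ij}\le U_{ij,n}$ survives. The clipped bounds still satisfy $\mu_{ij}-L_{ij,n}\le 2r_n$ and $U_{ij,n}-\mu_{ij}\le 2r_n$. A sharper statement is that $L_{ij,n}\ge\mu_{ij}-2r_n$ and $U_{ij,n}\le\mu_{ij}+2r_n$: the unclipped lower end is at least $\mu_{ij}-2r_n$, and clipping at $0$ only raises it. Multiplying by $S_{ij}(\tau_e)\le S_{\max}$ gives
\[
\underline V_{ij,n}\ge v_{ij}-2S_{\max}r_n,\qquad \overline V_{ij,n}\le v_{ij}+2S_{\max}r_n .
\]

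\emph{Gap argument.} For each $k\neq b$,
\[
\underline V_{ib,n}-\overline V_{ik,n}\ge v_{ib}-v_{ik}-4S_{\max}r_n\ge\Delta_v-4S_{\max}r_n .
\]
Against the fallback, $\underline V_{ib,n}-v_0\ge\Delta_v-2S_{\max}r_n$. Hence $\Delta_v>4S_{\max}r_n$ suffices for \eqref{eq:certify}. Since the rule uses $\argmax\underline V$ as incumbent, I also need that $b$ is actually chosen in \eqref{eq:incumbent}. This follows because $\underline V_{ib,n}>\overline V_{ik,n}\ge\underline V_{ik,n}$.

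\emph{Solving for $n$.} With $r_n=\sqrt{\log(2Kn_{\max}/\delta)/(2n)}$, the condition $\Delta_v>4S_{\max}r_n$ is equivalent to $n>8S_{\max}^2\log(2Kn_{\max}/\delta)/\Delta_v^2$, which is exactly \eqref{eq:samplebound}. This also explains the constant $8$: it is $16/2$, coming from a total slack of $4S_{\max}r_n$, namely $2r_n$ per side on each of the two intervals being compared.

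\emph{Main obstacle.} The delicate point is bookkeeping the constant. A naive per-interval half-width of $r_n$ around $\mu$ would give the constant $2$, not $8$. The $2r_n$ per side arises because the interval is centred at $\widehat\mu_{ij,n}$, not at $\mu_{ij}$. I will verify carefully that clipping never widens this deviation and that the $v_0$ comparison is dominated by the peer-versus-peer comparison.
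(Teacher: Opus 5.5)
Your proposal is correct and follows the paper's proof essentially step for step. You bound each value bound's one-sided deviation by $2S_{\max}r_n$, and you justify this more carefully than the paper does, including the effect of clipping. This gives a total slack of $4S_{\max}r_n$ in the peer-versus-peer comparison, which also covers the fallback comparison and makes $b$ the maximin incumbent; you then solve $4S_{\max}r_n<\Delta_v$ for $n$. The only slip is in your opening overview, where you claim $\Delta_v>2S_{\max}r_n$ forces certification. That suffices only for the comparison against $v_0$, not against rival peers, and your body correctly uses $4S_{\max}r_n$.
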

\begin{proof}
With no drift and exact survival, the realisable-value interval of each peer has
full width at most $2S_{\max}r_n$, and its true value lies anywhere inside. Hence
the incumbent's lower bound can sit a full width below its truth,
$\underline V_{ib,n}\ge v_{ib}-2S_{\max}r_n$, and each competitor's upper bound a
full width above its truth, $\overline V_{ik,n}\le v_{ik}+2S_{\max}r_n$.
Therefore $\underline V_{ib,n}-\overline V_{ik,n}\ge(v_{ib}-v_{ik})-4S_{\max}r_n
\ge\Delta_v-4S_{\max}r_n$, so $\underline V_{ib,n}>\overline V_{ik,n}$ for every
$k\neq b$ whenever $4S_{\max}r_n<\Delta_v$. Against the deterministic fallback only
the incumbent's single-sided deviation matters,
$\underline V_{ib,n}\ge v_{ib}-2S_{\max}r_n>v_0$ when $2S_{\max}r_n<\Delta_v$,
which the same condition implies. (The condition $4S_{\max}r_n<\Delta_v$ also
forces $b$ to be the maximin incumbent, since its lower bound then exceeds every
rival's upper and hence lower bound.) Substituting \eqref{eq:radius} into
$4S_{\max}r_n<\Delta_v$, squaring, and rearranging gives \eqref{eq:samplebound}.
\end{proof}

\begin{corollary}[Eventual certification]
\label{cor:consistency}
Under the hypotheses of \Cref{thm:samples} with $\Delta_v>0$, certification occurs
after finitely many probes on the event of \Cref{lem:uniform}: the required probe
count is the finite right-hand side of \eqref{eq:samplebound}. Hence the
confidence-safe rule is consistent whenever a strict realisable-value gap exists
and the contact survives long enough to collect the probes.
\end{corollary}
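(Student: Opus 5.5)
The corollary follows directly from \Cref{thm:samples}, combined with the fact that the right-hand side of \eqref{eq:samplebound} is a finite number. I would set
\[
n^\star:=\Bigl\lfloor \frac{8S_{\max}^2}{\Delta_v^2}\log\frac{2Kn_{\max}}{\delta}\Bigr\rfloor+1 .
\]
This is finite because $\Delta_v>0$, $S_{\max}\le 1$ by \Cref{ass:bounded}, and $K,n_{\max},\delta$ are fixed. For every $n\ge n^\star$ the strict inequality \eqref{eq:samplebound} holds. On the event of \Cref{lem:uniform}, \Cref{thm:samples} then yields \eqref{eq:certify} for peer $b$. The proof of that theorem also shows that $b$ is the maximin incumbent \eqref{eq:incumbent}, so the rule certifies the correct peer and not some other one. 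Moreover, by \Cref{thm:cert} the certified choice is correct.

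The argument needs two further points. The first is validity at an adaptively chosen probe count. The number of probes taken is a stopping time, but \Cref{lem:uniform} holds simultaneously for all $1\le n\le n_{\max}$. The band therefore applies at whatever count is reached, and in particular at the first $n\ge n^\star$. The second point is feasibility, which I expect to be the only real subtlety. The per-peer cap $n_{\max}$ and the contact lifetime could prevent $n^\star$ probes from ever being collected. For this reason the statement is conditional: ``whenever the contact survives long enough to collect the probes.'' I would make the hypotheses explicit by assuming $n^\star\le n_{\max}$ and that every competing peer's link survives $n^\star$ probe durations. This is needed because \Cref{thm:samples} assumes equal probe counts $n$ across peers. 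Under these assumptions, the certification condition holds no later than stage $n^\star$, since it holds at every $n\ge n^\star$ and the first stage at which it holds can only come earlier. This gives finite-time certification.

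Consistency then follows. As the confidence parameter is held fixed and probes accumulate, $r_n(\delta)\to0$ by \Cref{lem:width}, so the realisable-value intervals shrink to the true values $v_{ij}$. Any strict gap $\Delta_v>0$ is therefore eventually resolved. The monotone decrease of $r_n$ gives this threshold behaviour in $n$ directly.
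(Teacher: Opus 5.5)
Your proposal is correct and takes the paper's route. The corollary is an immediate consequence of \Cref{thm:samples}, since the right-hand side of \eqref{eq:samplebound} is finite when $\Delta_v>0$. Your explicit hypotheses ($n^\star\le n_{\max}$, so that the band of \Cref{lem:uniform} applies at $n^\star$, together with link survival) make precise the paper's informal proviso that the contact survive long enough. One small refinement: probes are taken one at a time, so equal counts $n^\star$ across all current peers need the links to survive roughly $\lvert\Nset_i(t)\rvert\,n^\star$ probe durations, not just $n^\star$.
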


\subsection{Regret of the maximin selection}
Even before certification, the maximin incumbent \eqref{eq:incumbent} carries a
one-sided guarantee: committing to it can lose at most one interval width relative
to the (unknown) best current peer. Write
$w^{V}_{ij,n}=\overline V_{ij,n}-\underline V_{ij,n}\ge0$ for the realisable-value
interval width (the value-space counterpart of the compatibility half-width
$w_{ij,n}$ of \Cref{lem:width}, with $w^{V}_{ij,n}\le2S_{\max}w_{ij,n}$).

\begin{theorem}[Maximin selection regret]
\label{thm:regret}
On the joint event of \Cref{lem:stale} and \Cref{ass:surv}, let
$j^\star=\argmax_{j\in\Nset_i(t)}v_{ij}$ be the (unknown) best current peer and let
$b$ be the maximin incumbent \eqref{eq:incumbent}. Then the instantaneous
selection regret of exchanging with $b$ satisfies
\begin{equation}
v_{ij^\star}-v_{ib}\ \le\ w^{V}_{ij^\star,n}\ \le\ \max_{j\in\Nset_i(t)}w^{V}_{ij,n}.
\label{eq:regret}
\end{equation}
That is, the maximin choice is within the widest realisable-value confidence
interval of the best peer.
\end{theorem}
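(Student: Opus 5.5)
The argument works on the joint event of \Cref{lem:stale} and \Cref{ass:surv}. On that event, interval propagation (as in the proof of \Cref{thm:cert}) gives $v_{ij}\in[\underline V_{ij,n},\overline V_{ij,n}]$ for every reachable peer $j$. I will use this containment for exactly two peers: the unknown best peer $j^\star$ and the maximin incumbent $b$. The proof is then a short chain of inequalities.

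\emph{First}, upper-bound the best peer's value by its own interval:
\[
v_{ij^\star}\le\overline V_{ij^\star,n}=\underline V_{ij^\star,n}+w^{V}_{ij^\star,n}.
\]
\emph{Second}, use the maximin definition \eqref{eq:incumbent}. Since $b$ maximises the lower bound over $\Nset_i(t)$ and $j^\star\in\Nset_i(t)$, we have $\underline V_{ij^\star,n}\le\underline V_{ib,n}$. \emph{Third}, use containment for $b$ to get $\underline V_{ib,n}\le v_{ib}$. Chaining the three steps gives
\[
v_{ij^\star}\le v_{ib}+w^{V}_{ij^\star,n},
\]
which is the first inequality in \eqref{eq:regret}. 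The second inequality is immediate, because $j^\star$ is one of the peers in the maximum over $\Nset_i(t)$.

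The only step needing care is that containment holds for both $j^\star$ and $b$ at once. This is exactly why we work on the \emph{simultaneous} event. \Cref{lem:uniform} is uniform over all tracked peers and all stages $n$, so it applies at the data-dependent identities $j^\star$ and $b$ and at the adaptively chosen probe count. Coverage for all peers is likewise provided by \Cref{ass:surv}. Peers that have not yet been probed cause no problem: they carry the vacuous interval $[0,1]$, which still contains the truth, so the argument applies to them as well, with a large width.

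I expect no real obstacle. The one subtlety is noting that the bound charges only the width of $j^\star$'s interval, not $b$'s, because the lower bound on $b$ enters through the maximin property rather than through $b$'s width. Finally, since the joint event has probability at least $1-\delta-\delta_S$ by a union bound, the regret guarantee holds with that probability.
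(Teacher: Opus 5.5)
Your proof is correct and follows the paper's argument step for step. You use containment $v_{ij}\in[\underline V_{ij,n},\overline V_{ij,n}]$ on the joint event and then chain $v_{ij^\star}\le\overline V_{ij^\star,n}=\underline V_{ij^\star,n}+w^{V}_{ij^\star,n}\le\underline V_{ib,n}+w^{V}_{ij^\star,n}\le v_{ib}+w^{V}_{ij^\star,n}$ via the maximin property and containment for $b$; your remarks on simultaneity, on the vacuous interval for unprobed peers, and on the $1-\delta-\delta_S$ probability are consistent with the paper.
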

\begin{proof}
On the stated event $v_{ij}\in[\underline V_{ij,n},\overline V_{ij,n}]$ for all
$j$. Hence
\[
v_{ij^\star}\le\overline V_{ij^\star,n}
=\underline V_{ij^\star,n}+w^{V}_{ij^\star,n}
\le\underline V_{ib,n}+w^{V}_{ij^\star,n}
\le v_{ib}+w^{V}_{ij^\star,n},
\]
where the middle inequality uses $\underline V_{ij^\star,n}\le\underline V_{ib,n}$
(the incumbent maximises the lower bound, \eqref{eq:incumbent}) and the last uses
$\underline V_{ib,n}\le v_{ib}$. Rearranging gives the first inequality in
\eqref{eq:regret}; the second is immediate.
\end{proof}

\begin{corollary}[Vanishing regret]
\label{cor:vanish}
Assume exact survival and equal probe counts $n$, and let $S_{\max}=\max_j
S_{ij}(\tau_e)$. Then $w^{V}_{ij,n}\le 2S_{\max}\bigl(r_n+\nu_{ij}\bar a_{ij,n}\bigr)$
and, in the drift-free case, $v_{ij^\star}-v_{ib}\le 2S_{\max}r_n
=O\!\bigl(\sqrt{n^{-1}\log(Kn_{\max}/\delta)}\bigr)\to0$. Thus maximin selection is
asymptotically oracle-optimal at the rate of \Cref{lem:width}.
\end{corollary}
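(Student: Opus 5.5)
The plan is to obtain \Cref{cor:vanish} directly from \Cref{thm:regret}: first bound each realisable-value width $w^{V}_{ij,n}$ in terms of the compatibility half-width, and then specialise to the drift-free case.

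\emph{Step 1 (width bound).} Under exact survival, $\underline S_{ij}=\overline S_{ij}=S_{ij}(\tau_e)$, so by \eqref{eq:valuebounds}
\[
w^{V}_{ij,n}=S_{ij}(\tau_e)\bigl(U_{ij,n}-L_{ij,n}\bigr).
\]
Clipping to $[0,1]$ is $1$-Lipschitz and monotone, so it can only shorten the interval. Hence $U_{ij,n}-L_{ij,n}\le 2\bigl(r_n+\nu_{ij}\bar a_{ij,n}\bigr)$ by \eqref{eq:driftlower}--\eqref{eq:driftci}. Using $S_{ij}(\tau_e)\le S_{\max}$ then gives the stated bound $w^{V}_{ij,n}\le 2S_{\max}(r_n+\nu_{ij}\bar a_{ij,n})$.

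\emph{Step 2 (regret).} We work on the event of \Cref{lem:uniform}, on which \Cref{lem:stale} holds. Together with exact survival, which makes \Cref{ass:surv} hold trivially with $\delta_S=0$, this event suffices for \Cref{thm:regret}. That theorem gives $v_{ij^\star}-v_{ib}\le w^{V}_{ij^\star,n}$. Setting $\nu_{ij^\star}=0$ in Step 1 yields $v_{ij^\star}-v_{ib}\le 2S_{\max}r_n$. Equal probe counts ensure that the same radius $r_n$ applies to $j^\star$ as to every other peer.

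\emph{Step 3 (rate).} Substituting \eqref{eq:radius} gives
\[
2S_{\max}r_n=S_{\max}\sqrt{2\log(2Kn_{\max}/\delta)/n}.
\]
Since $S_{\max}\le 1$, this is $O\bigl(\sqrt{n^{-1}\log(Kn_{\max}/\delta)}\bigr)$. The constant $2$ inside the logarithm is absorbed into the $O(\cdot)$ because $\log(2x)\le 2\log x$ for $x\ge 2$. By \Cref{lem:width}, $r_n\to0$, which gives asymptotic oracle-optimality.

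The only delicate point is the limit $n\to\infty$, since $n\le n_{\max}$ and $r_n$ itself depends on $n_{\max}$. I would state the convergence for fixed $\delta$, $K$ and $n_{\max}$ as $n$ increases up to the cap. Equivalently, the bound is read as an explicit finite-sample rate, with "$\to0$" understood as $n_{\max}\to\infty$ with $n$ growing faster than $\log n_{\max}$. This matches the sense in which \Cref{lem:width} asserts contraction to $0$.
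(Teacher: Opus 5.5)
Your proposal is correct and follows the paper's own proof essentially step for step: under exact survival $w^{V}_{ij,n}=S_{ij}(\tau_e)(U_{ij,n}-L_{ij,n})\le 2S_{\max}(r_n+\nu_{ij}\bar a_{ij,n})$, and setting $\nu_{ij}=0$ and substituting into the first inequality of \eqref{eq:regret} gives the rate. Your extra remarks are sound clarifications of points the paper leaves implicit: clipping only shortens the interval, exact survival makes \Cref{ass:surv} hold with $\delta_S=0$, and ``$\to0$'' requires $n_{\max}\to\infty$ with $n/\log n_{\max}\to\infty$ because $n\le n_{\max}$.
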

\begin{proof}
By \eqref{eq:valuebounds} and \Cref{lem:stale},
$w^{V}_{ij,n}=\overline S_{ij}U_{ij,n}-\underline S_{ij}L_{ij,n}\le
S_{\max}(U_{ij,n}-L_{ij,n})\le 2S_{\max}(r_n+\nu_{ij}\bar a_{ij,n})$ under exact
survival $\underline S=\overline S\le S_{\max}$. Setting $\nu_{ij}=0$ and
substituting into \eqref{eq:regret} with \Cref{lem:width} gives the rate.
\end{proof}

\subsection{An interior optimal stopping time}
The two clocks of \Cref{fig:twoclocks} induce an interior optimum: the guaranteed
realisable value first rises as sampling error contracts and then falls as
survival decays and evidence goes stale. We formalise existence.

\begin{proposition}[Existence of an interior optimal stopping time]
\label{prop:interior}
Model the guaranteed realisable value of a tracked peer as a function of elapsed
episode time $t\ge0$ by $\underline V(t)=S(t)L(t)-c_e$, where $S:[0,\infty)\to(0,1]$
is continuous and strictly decreasing with $S(0)=1$ and $S(t)\to0$, and
$L:[0,\infty)\to[0,1]$ is continuous with $L(0)=0$ (a newly encountered peer has
the vacuous lower bound) and $\sup_t L(t)>0$. Then $\underline V$ attains its
maximum at some finite time $t^\star\in(0,\infty)$, and $\underline V(t^\star)>
\underline V(0)=\underline V(\infty^-)=-c_e$. If in addition the region
$\{t:L(t)>0\}$ is an interval on which $\log S$ is concave and $\log L$ is strictly
concave, then $t^\star$ is unique.
\end{proposition}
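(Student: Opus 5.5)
The plan is to work with $f(t):=S(t)L(t)$ on $[0,\infty)$, since $\underline V=f-c_e$ differs from it only by a constant. Continuity of $f$ follows from continuity of $S$ and $L$. At the endpoints, $f(0)=S(0)L(0)=1\cdot 0=0$. As $t\to\infty$, the bound $0\le f(t)\le S(t)\to0$ gives $f(t)\to0$, which is the sense in which $\underline V(\infty^-)=-c_e$. Because $\sup_t L(t)>0$, I will pick $t_1$ with $L(t_1)>0$. Since $S>0$ everywhere, this gives $f(t_1)=:\eta>0$, and $t_1>0$ because $L(0)=0$.

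Existence of a maximiser then follows by a compactness step. Since $f(t)\to0$, there is $T>t_1$ with $f(t)<\eta$ for all $t\ge T$. On the compact interval $[0,T]$ the continuous $f$ attains its maximum at some $t^\star$, and $f(t^\star)\ge f(t_1)=\eta$. On $[T,\infty)$ we have $f<\eta\le f(t^\star)$, so $t^\star$ is a global maximiser on $[0,\infty)$. Moreover $f(t^\star)\ge\eta>0=f(0)$, so $t^\star\neq0$, and $t^\star<T<\infty$. Subtracting $c_e$ gives $\underline V(t^\star)>\underline V(0)=\underline V(\infty^-)=-c_e$, so the maximum is attained at an interior finite time.

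For uniqueness, let $I=\{t:L(t)>0\}$, assumed to be an interval. Every maximiser lies in $I$, because outside $I$ we have $f=0<\eta$. On $I$, $\log f=\log S+\log L$ is the sum of a concave and a strictly concave function, hence strictly concave. A strictly concave function on an interval has at most one maximiser: if $t_a\neq t_b$ both attained the maximum $m$, then at the midpoint strict concavity would give $\log f>m$, a contradiction. Since $\log$ is strictly increasing, maximisers of $f$ and of $\log f$ on $I$ coincide, so $t^\star$ is unique.

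The main obstacle is the behaviour at infinity. A continuous function on a non-compact half-line need not attain its supremum, so the argument has to use the decay $S(t)\to0$ together with the boundedness $L\le1$ to confine the maximiser to a compact window. The choice of $T$ in the second step is exactly what does this.
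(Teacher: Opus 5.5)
Your proof is correct and follows the paper's argument essentially step for step: decay of $S$ together with $L\le 1$ confines the maximiser to a compact window, the extreme value theorem applies there, and strict log-concavity of $SL$ on $\{L>0\}$ gives uniqueness. Using $\eta=f(t_1)$ as the cut-off is slightly cleaner than the paper's $M-\varepsilon$, and you state explicitly that every maximiser must lie in $\{L>0\}$ because $f=0$ elsewhere, a point the paper leaves implicit.
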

\begin{proof}
$\underline V$ is continuous with $\underline V(0)=S(0)\cdot0-c_e=-c_e$ and
$\lim_{t\to\infty}\underline V(t)=-c_e$ because $S(t)\to0$ and $L$ is bounded.
Since $\sup_tL(t)>0$ and $S>0$, there is $t_1$ with
$\underline V(t_1)=S(t_1)L(t_1)-c_e>-c_e$; set
$M=\sup_t\underline V(t)\ge\underline V(t_1)>-c_e$. Choose $\varepsilon\in(0,M+c_e)$
and $T<\infty$ with $\underline V(t)<M-\varepsilon$ for all $t\ge T$ (possible as
$\underline V(t)\to-c_e<M-\varepsilon$ for small $\varepsilon$); by the extreme
value theorem $\underline V$ attains its maximum over the compact set $[0,T]$, and
this value equals $M$ since values beyond $T$ are strictly smaller. The maximiser
$t^\star$ is interior because the boundary values $\underline V(0)=-c_e<M$ and,
for $t\ge T$, $\underline V(t)<M$. For uniqueness, on $\{L>0\}$ write
$\log\underline V^+(t)$ where $\underline V^+=S L$; then $\log(SL)=\log S+\log L$
is strictly concave (a concave plus a strictly concave function), so $SL$ is
strictly log-concave and hence strictly quasi-concave, giving a unique maximiser
of $SL$ and therefore of $\underline V=SL-c_e$.
\end{proof}

% ==================================================================== section 6
\section{Mobility-Aware Value of Information and Comparative Statics}
\label{sec:voi}

Certification decides \emph{whether} the best current peer is provably dominant.
The complementary question is \emph{whether another probe is worth taking}, given
that probing consumes contact lifetime.

\subsection{A sufficient do-not-probe rule}
Let
\begin{equation}
B_n=\max\Bigl\{v_0,\ \max_j\underline V_{ij,n}\Bigr\},\qquad
U_n^{\max}=\max_j\overline V_{ij,n},\qquad
\VPI_n^{\mathrm{ub}}=\bigl[U_n^{\max}-B_n\bigr]_+ .
\label{eq:vpi}
\end{equation}
$B_n$ is the value already guaranteed; $U_n^{\max}$ is the largest value still
compatible with the current intervals; and $\VPI_n^{\mathrm{ub}}$ upper-bounds the
improvement in guaranteed value achievable even with \emph{perfect} information
about current candidates. Mobility adds exposure: while a probe of duration
$\tau_p$ runs, the incumbent $b$ disappears with probability
$1-\widehat S_{ib}(\tau_p)$, placing its protected surplus at risk. Define the
opportunity-risk proxy
\begin{equation}
\Omega_j=\bigl(1-\widehat S_{ij}(\tau_p)\bigr)\bigl[\underline V_{ij,n}-v_0\bigr]_+,
\label{eq:opp}
\end{equation}
and the mobility-aware \emph{do-not-probe} test
$\VPI_n^{\mathrm{ub}}\le c_p+\Omega_b$.

\begin{theorem}[Sufficient mobility-aware stopping rule]
\label{thm:vpi}
Let $B_n,U_n^{\max},\VPI_n^{\mathrm{ub}}$ be as in \eqref{eq:vpi} and $\Omega_b$
as in \eqref{eq:opp}. Suppose the only benefit of another probe is to resolve
uncertainty among currently available peers, and the actual loss caused by
incumbent disappearance during that probe is lower-bounded by $\Omega_b$. If
\begin{equation}
\VPI_n^{\mathrm{ub}}\le c_p+\Omega_b,
\label{eq:noprobe}
\end{equation}
then no single additional probe can improve the current guaranteed value after
accounting for probe cost and opportunity loss. Without the opportunity-loss
lower-bound assumption, $\VPI_n^{\mathrm{ub}}\le c_p$ remains a sufficient
do-not-probe condition under the same current-candidate premise.
\end{theorem}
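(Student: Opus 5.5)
The plan is to bound the net gain of one additional probe, relative to stopping now with the guaranteed value $B_n$, and show it is non-positive under \eqref{eq:noprobe}. I would work on the joint event of \Cref{lem:stale} and \Cref{ass:surv}. There every current realisable value satisfies $v_{ij}\in[\underline V_{ij,n},\overline V_{ij,n}]$, as in the proof of \Cref{thm:cert}. Stopping now secures at least $B_n$, by taking the maximin incumbent $b$ of \eqref{eq:incumbent} or the fallback $v_0$.

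The first step is to bound the best-case benefit. Under the current-candidate premise, the only thing a probe can change is the guaranteed value over the same candidate set. Even with perfect information, the best achievable value is $\max\{v_0,\max_j v_{ij}\}\le\max\{v_0,U_n^{\max}\}$. Hence the post-probe guaranteed value, in any realisation where the incumbent survives, is at most $B_n+[U_n^{\max}-B_n]_+=B_n+\VPI_n^{\mathrm{ub}}$. Here I use that $B_n\ge v_0$, so the positive part correctly handles the case $U_n^{\max}<B_n$.

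The second step is the cost side. The direct cost $c_p$ is paid in both branches, as in \eqref{eq:qprobe}. On the disconnection event of $b$, which has probability $1-\widehat S_{ib}(\tau_p)$, the protected surplus $[\underline V_{ib,n}-v_0]_+$ is lost, and by hypothesis the expected loss is at least $\Omega_b$. Combining the two steps, the expected guaranteed value after probing is at most $B_n+\VPI_n^{\mathrm{ub}}-c_p-\Omega_b$. Under \eqref{eq:noprobe} this is at most $B_n$, which proves the main claim. For the final sentence, I drop the opportunity-loss term and use only $\Omega_b\ge0$ (immediate from \eqref{eq:opp}). The bound then becomes $B_n+\VPI_n^{\mathrm{ub}}-c_p\le B_n$ whenever $\VPI_n^{\mathrm{ub}}\le c_p$, so that condition is sufficient on its own.

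The main obstacle is keeping the gain bound and the loss subtraction from double counting, since the gain bound already uses the survival branch. I would handle this by writing the post-probe value as a mixture over survival and disconnection. In the survival branch the value is at most $B_n+\VPI_n^{\mathrm{ub}}$. In the disconnection branch the value is at most $B_n+\VPI_n^{\mathrm{ub}}$ minus the realised loss. The hypothesis that the actual loss is lower-bounded by $\Omega_b$ is then exactly what lets the subtraction go through. This is an accounting argument within the stated premises, not a comparison with the waiting option, which is handled through $\rho=W$.
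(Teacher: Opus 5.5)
Your proposal is correct and takes the same route as the paper's proof. Both bound the benefit of a probe by the perfect-information gain $\VPI_n^{\mathrm{ub}}$ over current candidates, charge $c_p+\Omega_b$, conclude that the net gain is non-positive, and drop the loss term for the assumption-light form; you additionally make explicit the coverage event, the role of the positive part via $B_n\ge v_0$, and the survival/disconnection mixture, all of which the paper leaves implicit. One small correction for the final sentence: the fact you actually need is that the realised loss is non-negative, which holds because your perfect-information bound applies in the disconnection branch as well, not merely the fact that $\Omega_b\ge 0$.
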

\begin{proof}
Perfect information about all current candidates raises the guaranteed value by at
most $U_n^{\max}-B_n=\VPI_n^{\mathrm{ub}}$, and one actual probe cannot exceed
perfect information. A probe costs $c_p$ and, by hypothesis, exposes at least
$\Omega_b$ of protected surplus to disconnection. If the upper bound on gain does
not exceed $c_p+\Omega_b$, the net guaranteed gain of probing is non-positive.
Setting $\Omega_b=0$ gives the assumption-light form.
\end{proof}

When probing remains admissible, PROSE probes the peer with the largest
mobility-adjusted acquisition index
\begin{equation}
I_{ij}=\widehat S_{ij}(\tau_p)\,\bigl(\overline V_{ij,n}-\underline V_{ij,n}\bigr)-c_p-\Omega_j,
\label{eq:index}
\end{equation}
among peers with $I_{ij}>0$ and $n_{ij}<n_{\max}$: it rewards uncertainty likely
to survive the probe and charges acquisition and opportunity risk.

The value-of-information upper bound is controlled by the widest realisable-value
interval, which by \Cref{lem:width} contracts as probing proceeds. This yields a
deterministic diminishing-returns envelope that the myopic-optimality analysis of
\Cref{sec:myopic} relies upon.

\begin{lemma}[Diminishing returns of probing]
\label{lem:diminish}
At every stage, $\VPI_n^{\mathrm{ub}}\le\max_{j\in\Nset_i(t)}w^{V}_{ij,n}$, where
$w^{V}_{ij,n}=\overline V_{ij,n}-\underline V_{ij,n}$. Hence, absent new peer arrivals
and with exact survival and equal drift-free probing, $\VPI_n^{\mathrm{ub}}\le
2S_{\max}r_n$ is deterministically non-increasing in $n$ and tends to $0$.
\end{lemma}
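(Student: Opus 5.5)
The plan has two parts. First comes a pointwise comparison between $U_n^{\max}$ and $B_n$. Then the width bound from \Cref{cor:vanish} is substituted in.

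\emph{Step 1.} Let $j^\dagger=\argmax_j\overline V_{ij,n}$, so $U_n^{\max}=\overline V_{ij^\dagger,n}$. By definition $B_n\ge\max_j\underline V_{ij,n}\ge\underline V_{ij^\dagger,n}$. Hence
\[
U_n^{\max}-B_n\le\overline V_{ij^\dagger,n}-\underline V_{ij^\dagger,n}=w^V_{ij^\dagger,n}\le\max_j w^V_{ij,n}.
\]
The right-hand side is non-negative, because each $w^V_{ij,n}\ge0$: monotone interval propagation in \eqref{eq:valuebounds} with $\underline S\le\overline S$ and $L\le U$ gives $\underline V\le\overline V$. Taking the positive part therefore preserves the inequality, and $\VPI_n^{\mathrm{ub}}=[U_n^{\max}-B_n]_+\le\max_j w^V_{ij,n}$. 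The fallback $v_0$ inside $B_n$ only helps, since it enlarges $B_n$. This step is fully deterministic and needs no event.

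\emph{Step 2.} Assume exact survival, equal probe counts $n$, $\nu_{ij}=0$, and a fixed peer set. \Cref{cor:vanish} gives $w^V_{ij,n}\le S_{\max}(U_{ij,n}-L_{ij,n})\le2S_{\max}r_n$. Combining this with Step 1 yields $\VPI_n^{\mathrm{ub}}\le2S_{\max}r_n$. By \Cref{lem:width}, $r_n$ is a deterministic, strictly decreasing function of $n$ with $r_n=\Theta(n^{-1/2})\to0$. So the envelope $2S_{\max}r_n$ is deterministic, non-increasing, and vanishing, and $\VPI_n^{\mathrm{ub}}$ is squeezed between $0$ and this envelope.

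\emph{Main obstacle: the phrase ``deterministically non-increasing''.} $\VPI_n^{\mathrm{ub}}$ is itself random, because the interval centres move with the data. Its raw sequence need not be monotone. I would read the claim as monotonicity and vanishing of the deterministic envelope $2S_{\max}r_n$, which is all that \Cref{sec:myopic} uses, and state this explicitly.

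Two further points need care. First, clipping to $[0,1]$ only shrinks $U-L$, so the bound $2r_n$ survives. Second, ``absent new peer arrivals'' is needed so that the maximum in $\max_j w^V_{ij,n}$ ranges over peers with $n$ probes each. A newly arrived peer with the vacuous interval $[0,1]$ would have $w^V$ up to $S_{\max}$ and break the rate.
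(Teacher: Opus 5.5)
Your proof is correct and follows the paper's argument essentially verbatim: take the peer attaining $U_n^{\max}$, use $B_n\ge\underline V_{ij',n}$ to bound $\VPI_n^{\mathrm{ub}}$ by that peer's width, then invoke the width bound of \Cref{cor:vanish} and the monotone decay of $r_n$ from \Cref{lem:width}. Your two additions are consistent with what the paper's proof actually establishes: you note that the widths are non-negative, which justifies passing to the positive part, and you read ``deterministically non-increasing'' as a property of the envelope $2S_{\max}r_n$ rather than of the random quantity $\VPI_n^{\mathrm{ub}}$.
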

\begin{proof}
Let $j'\in\argmax_j\overline V_{ij,n}$, so $U_n^{\max}=\overline V_{ij',n}$. By
\eqref{eq:vpi}, $B_n\ge\underline V_{ij',n}$, hence
$U_n^{\max}-B_n\le\overline V_{ij',n}-\underline V_{ij',n}=w^{V}_{ij',n}\le\max_j
w^{V}_{ij,n}$, and $\VPI_n^{\mathrm{ub}}=[U_n^{\max}-B_n]_+$ inherits the bound. Under
exact survival and no drift, $w^{V}_{ij,n}\le2S_{\max}r_n$ (as in
\Cref{cor:vanish}), which is strictly decreasing in $n$ by \Cref{lem:width} and
vanishes.
\end{proof}

\subsection{Comparative statics of mobility}
The next results make precise the intuition that volatility should curtail
information-gathering. \Cref{fig:regions} visualises both the decision regions and
the monotone decay of the probe value.

\begin{theorem}[Higher link hazard lowers continuation value]
\label{thm:hazard}
Consider a one-step change in the exponential residual-contact hazard $h$ of a
current peer, with probe-survival $q=e^{-h\tau_p}$. Let $C=\E[V(s^+)]$ and
$D=\E[V(s^{\mathrm{drop}})]$ be the completed-probe and drop continuations, with
$C\ge D$, held fixed with respect to this change. Then
$Q_P(h)=-c_p+e^{-h\tau_p}C+(1-e^{-h\tau_p})D$ is non-increasing in $h$, and
strictly decreasing when $C>D$ and $\tau_p>0$. Against any stopping action whose
value is unaffected by this perturbation, the one-step stopping region therefore
expands as $h$ increases.
\end{theorem}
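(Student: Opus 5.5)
The plan is to rewrite $Q_P$ as an affine function of the survival probability $q$ and then compose it with the monotone map $h\mapsto q$. Regrouping the terms gives
\[
Q_P(h)=-c_p+D+e^{-h\tau_p}\,(C-D),
\]
so $Q_P$ depends on $h$ only through $q(h)=e^{-h\tau_p}$, with slope $C-D\ge0$ in $q$. Since $C$ and $D$ are held fixed under the perturbation, the result reduces to the monotonicity of a one-dimensional composition.

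The steps are as follows.
\begin{enumerate}
\item Observe that $q(h)=e^{-h\tau_p}$ is non-increasing in $h\ge0$ for $\tau_p\ge0$, and strictly decreasing when $\tau_p>0$. This follows from $q'(h)=-\tau_p e^{-h\tau_p}$.
\item Multiply by the non-negative constant $C-D$. Then $q\mapsto -c_p+D+q(C-D)$ is non-decreasing, and the composition $Q_P(h)$ is non-increasing.
\item For strictness, differentiate directly: $\frac{dQ_P}{dh}=-\tau_p e^{-h\tau_p}(C-D)$. This is strictly negative when $C>D$ and $\tau_p>0$, because the exponential factor is strictly positive.
\end{enumerate}

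For the stopping-region claim, fix any stopping action value $g$ that the perturbation leaves unchanged. Stopping is preferred to this probe exactly when $g\ge Q_P(h)$, in the sense of the one-step comparison of \Cref{prop:snell} restricted to this continuation. Since $Q_P(h)$ is non-increasing in $h$, the inequality $g\ge Q_P(h_1)$ implies $g\ge Q_P(h_2)$ for every $h_2\ge h_1$. Hence the set of states with $g\ge Q_P$ is nested increasing in $h$.

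The same nesting should persist when $Q_P$ is replaced by the full continuation value $W=\max\{\max_j Q_P(s,j),Q_W\}$. The perturbed term is non-increasing in $h$, the other terms are fixed, and the maximum of such functions is non-increasing.

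The only delicate point is interpretive rather than technical. It is the hypothesis that $C$ and $D$ are frozen, since in reality $V(s^+)$ itself depends on future survival. I would make explicit that the statement is a ceteris-paribus, one-step comparative static, and that the condition $C\ge D$ is what fixes the sign. Otherwise the argument is a direct calculation.
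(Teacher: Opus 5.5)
Your proof is correct and matches the paper's argument. The paper differentiates to get $dQ_P/dh=-\tau_p e^{-h\tau_p}(C-D)\le 0$, strict when $C>D$ and $\tau_p>0$, and concludes that the stopping region grows against an unchanged stopping value, exactly as you do; your affine-in-$q$ regrouping and the extension to the full continuation value $W$ (as a maximum of one non-increasing term and fixed terms) are harmless additions.
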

\begin{proof}
Differentiating, $\dfrac{dQ_P}{dh}=-\tau_p e^{-h\tau_p}(C-D)\le0$, with strict
inequality when $C>D$ and $\tau_p>0$. Since $Q_P$ falls while the competing
stopping value is unchanged, the set of hazards on which stopping dominates
probing grows.
\end{proof}

\begin{proposition}[Acquisition index is monotone in hazard]
\label{prop:indexmono}
Under the exponential model $\widehat S_{ij}(\tau_p)=e^{-h_{ij}\tau_p}$, the
acquisition index \eqref{eq:index} is non-increasing in the hazard $h_{ij}$,
holding the value bounds and $v_0$ fixed. Hence higher hazard weakly shrinks the
probe-eligible set $\{j:I_{ij}>0\}$.
\end{proposition}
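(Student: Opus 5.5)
Writing the index as an explicit function of the hazard makes the dependence transparent:
\[
I_{ij}(h_{ij})=e^{-h_{ij}\tau_p}\,w^{V}_{ij,n}-c_p-\bigl(1-e^{-h_{ij}\tau_p}\bigr)\bigl[\underline V_{ij,n}-v_0\bigr]_+,
\]
with $w^{V}_{ij,n}=\overline V_{ij,n}-\underline V_{ij,n}\ge0$. Under the stated convention the value bounds $\underline V_{ij,n},\overline V_{ij,n}$ and $v_0$ are held fixed, so both $w^{V}_{ij,n}$ and $P_{ij}:=[\underline V_{ij,n}-v_0]_+\ge0$ are constants. Hence $I_{ij}$ depends on $h_{ij}$ only through $q=e^{-h_{ij}\tau_p}$, and it is affine in $q$:
\[
I_{ij}=q\,(w^{V}_{ij,n}+P_{ij})-c_p-P_{ij}.
\]

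Since $w^{V}_{ij,n}+P_{ij}\ge0$, the index is non-decreasing in $q$. The map $h\mapsto e^{-h\tau_p}$ is non-increasing for $\tau_p\ge0$. Composing the two shows that $I_{ij}$ is non-increasing in $h_{ij}$. Equivalently, differentiating gives $dI_{ij}/dh_{ij}=-\tau_p e^{-h_{ij}\tau_p}(w^{V}_{ij,n}+P_{ij})\le0$, which mirrors the computation in \Cref{thm:hazard}. Monotonicity is strict when $\tau_p>0$ and $w^{V}_{ij,n}+P_{ij}>0$.

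For the eligible-set claim, suppose the hazards move from $h$ to $h'\ge h$ coordinatewise. For each $j$ we then have $I_{ij}(h')\le I_{ij}(h)$. So if $I_{ij}(h')>0$, then $I_{ij}(h)>0$, which gives $\{j:I_{ij}(h')>0\}\subseteq\{j:I_{ij}(h)>0\}$. Intersecting both sides with the hazard-independent constraint $n_{ij}<n_{\max}$ preserves the inclusion.

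The one point that needs care is confirming that $\Omega_j$ enters the index with a negative sign. Its coefficient $(1-q)$ grows with the hazard, so the opportunity-risk charge only reinforces the decrease of the information term $q\,w^{V}_{ij,n}$. Because both terms push the same way, there is no competing effect to balance.
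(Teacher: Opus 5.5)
Your proof is correct and takes the same approach as the paper's: you rewrite $I_{ij}=q\,(w^{V}_{ij,n}+[\underline V_{ij,n}-v_0]_+)-c_p-[\underline V_{ij,n}-v_0]_+$ as affine and non-decreasing in $q=e^{-h_{ij}\tau_p}$, use that $q$ is decreasing in $h_{ij}$, and then read off the threshold inclusion. Your explicit set-inclusion step and derivative check are harmless elaborations of that argument.
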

\begin{proof}
Write $q=e^{-h_{ij}\tau_p}$, decreasing in $h_{ij}$, and let
$\omega=[\underline V_{ij,n}-v_0]_+\ge0$ and $\Delta U=\overline
V_{ij,n}-\underline V_{ij,n}\ge0$. Then
$I_{ij}=q\,\Delta U-c_p-(1-q)\omega=q(\Delta U+\omega)-c_p-\omega$, which is
non-decreasing in $q$ and thus non-increasing in $h_{ij}$. Monotonicity of the
threshold-crossing $\{I_{ij}>0\}$ follows.
\end{proof}

\begin{figure}[t]
  \centering
  \includegraphics[width=\linewidth]{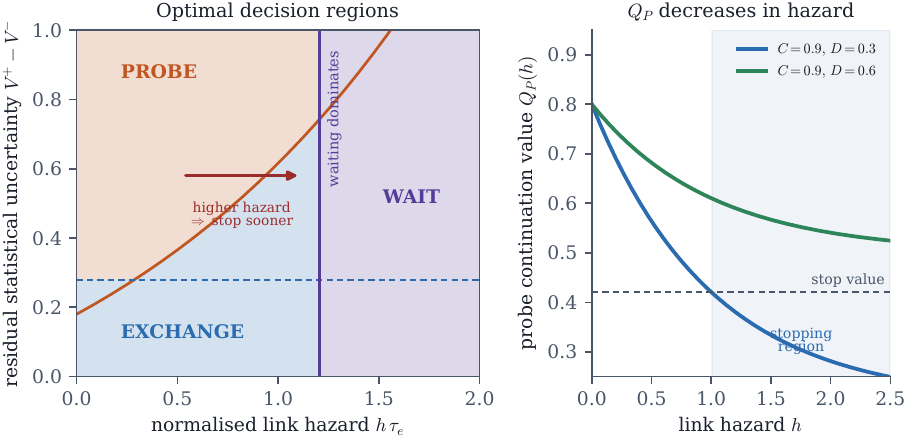}
  \caption{\emph{Left:} optimal decision regions in the plane of normalised link
  hazard $h\tau_e$ and residual statistical uncertainty
  $\overline V-\underline V$. Low uncertainty and adequate survival certify an
  \textsc{Exchange}; high uncertainty with survivable links favours a
  \textsc{Probe}; high hazard makes current links doomed and the future-arrival
  option dominant (\textsc{Wait}). Increasing hazard shifts the boundary so the
  learner stops sooner. \emph{Right:} the probe continuation value $Q_P(h)$
  decreases monotonically in hazard (\Cref{thm:hazard}); once it falls below a
  hazard-independent stopping value, the stopping region is entered.}
  \label{fig:regions}
\end{figure}

% ==================================================================== section 7
\section{The Waiting Option and Reservation Values}
\label{sec:wait}

Waiting exercises an option on future contacts. Let new contacts arrive according
to a marked-Poisson process of rate $\lambda_i$, each with realisable value mark
$V\sim F_i^V$ supported on $[v_{\min},v_{\max}]$, estimated from recent contact
history. \Cref{fig:waiting} plots the resulting value and reservation levels.

\begin{proposition}[Value of waiting under marked-Poisson arrivals]
\label{prop:wait}
Suppose future contacts during a window $\Delta$ form a Poisson process of rate
$\lambda$ and each arrival has an independent mark $V\sim F^V$ on
$[v_{\min},v_{\max}]$. Let $M_\Delta$ be the maximum arrival value
($M_\Delta=-\infty$ if none arrives) and $v_0\in[v_{\min},v_{\max}]$. Then, with a
delay price $c_w$,
\begin{equation}
Q_W(v_0,\Delta)=v_0+\int_{v_0}^{v_{\max}}\bigl[1-e^{-\lambda\Delta(1-F^V(x))}\bigr]\,dx-c_w\Delta.
\label{eq:wait}
\end{equation}
\end{proposition}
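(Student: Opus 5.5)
The waiting value is understood as follows: at the end of the window the receiver takes the better of the fallback $v_0$ and the best arrival, and it pays the linear delay price. So I will prove
\[
Q_W(v_0,\Delta)=\E\bigl[\max\{v_0,M_\Delta\}\bigr]-c_w\Delta .
\]
The task then reduces to computing the expectation of the maximum of $v_0$ and the largest mark of a marked Poisson process on a window of length $\Delta$.

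\textbf{Step 1: distribution of $M_\Delta$.} I will use the thinning (colouring) property of the marked Poisson process. For a fixed level $x$, the arrivals whose mark exceeds $x$ form a Poisson process of rate $\lambda(1-F^V(x))$, because the marks are i.i.d.\ and independent of the arrival times. Hence
\[
\Prb(M_\Delta\le x)=\Prb(\text{no arrival with mark}>x)=e^{-\lambda\Delta(1-F^V(x))}.
\]
This identity holds for every $x$, and it includes the event of no arrival at all, for which $M_\Delta=-\infty$. No separate treatment of the empty window is therefore needed.

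\textbf{Step 2: layer-cake integration.} Set $Y=\max\{v_0,M_\Delta\}$. Since $v_0\ge v_{\min}$ and the marks lie in $[v_{\min},v_{\max}]$, we have $Y\in[v_0,v_{\max}]$ almost surely, so $Y-v_0$ is a bounded nonnegative random variable. Writing $\E[Y-v_0]=\int_0^\infty\Prb(Y-v_0>u)\,du$ and substituting $x=v_0+u$ gives
\[
\E[Y]=v_0+\int_{v_0}^{v_{\max}}\Prb(Y>x)\,dx .
\]
The upper limit is $v_{\max}$ because $\Prb(Y>x)=0$ for $x\ge v_{\max}$. For $x\ge v_0$ the events $\{Y>x\}$ and $\{M_\Delta>x\}$ coincide, so by Step 1,
\[
\Prb(Y>x)=1-e^{-\lambda\Delta(1-F^V(x))}.
\]
Subtracting $c_w\Delta$ then yields \eqref{eq:wait}.

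\textbf{Main obstacle.} The only real difficulty is justifying the modelling convention behind the first display, and I will state it explicitly. Waiting here means holding the fallback while collecting arrivals over the whole window, then selecting the best option at the end. This is a one-shot option; it does not recurse into the continuation value of a full dynamic program. I also assume that arrivals' realisable values are the marks themselves, with any survival discount already folded into $F^V$.

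On the technical side, the places to be careful are the right-continuity of $F^V$ together with strict versus non-strict inequalities, which affect only a Lebesgue-null set of $x$, and the fact that a possible atom of the marks at $v_{\min}$ causes no trouble. Boundedness of the integrand on a compact interval then makes every step rigorous.
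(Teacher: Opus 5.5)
Your proof is correct and follows essentially the same route as the paper: the thinning argument gives $\Prb(M_\Delta\le x)=e^{-\lambda\Delta(1-F^V(x))}$, and applying the tail-integral identity to $Y=\max\{v_0,M_\Delta\}$ and then subtracting $c_w\Delta$ gives the formula. Your explicit convention $Q_W=\E[\max\{v_0,M_\Delta\}]-c_w\Delta$ (a one-shot option, not a recursion into the full dynamic program) is the one the paper's proof uses implicitly.
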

\begin{proof}
The number of arrivals with marks exceeding $x$ is Poisson with mean
$\lambda\Delta(1-F^V(x))$, so
$\Prb(M_\Delta\le x)=e^{-\lambda\Delta(1-F^V(x))}$. By the tail-integral
identity, for $Y=\max\{v_0,M_\Delta\}$,
$\E[Y]-v_0=\int_{v_0}^{v_{\max}}\Prb(Y>x)\,dx=\int_{v_0}^{v_{\max}}[1-\Prb(M_\Delta\le x)]\,dx$,
which is the integral in \eqref{eq:wait}. Subtracting the delay cost $c_w\Delta$
gives the claim.
\end{proof}

\begin{lemma}[Monotonicity of the waiting value]
\label{lem:qwmono}
The gross waiting value $Q_W(v_0,\Delta)+c_w\Delta$ is non-decreasing in the
arrival rate $\lambda$, in the wait window $\Delta$, and under a first-order
stochastic improvement of the mark law $F^V$. In particular a busier or
better-mark environment makes waiting weakly more attractive, and the net value
$Q_W$ is quasi-concave in $\Delta$ with an interior optimal window whenever the
marginal arrival benefit at $\Delta=0$ exceeds $c_w$.
\end{lemma}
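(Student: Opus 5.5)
The plan is to work directly with the closed form \eqref{eq:wait} of \Cref{prop:wait}. Write the gross value as
\[
G(\lambda,\Delta,F^V):=Q_W(v_0,\Delta)+c_w\Delta=v_0+\int_{v_0}^{v_{\max}}\phi\bigl(\lambda\Delta\,\bar F(x)\bigr)\,dx,
\qquad \phi(a)=1-e^{-a},\quad \bar F=1-F^V .
\]
The function $\phi$ is non-decreasing and concave on $[0,\infty)$, and $\bar F(x)\ge0$. Every claim will then reduce to a pointwise statement about the integrand, integrated over the fixed compact interval $[v_0,v_{\max}]$.

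\textbf{Monotonicity in $\lambda$ and $\Delta$.} The integrand depends on $(\lambda,\Delta)$ only through the product $\lambda\Delta\bar F(x)$. This product is non-decreasing in each of $\lambda$ and $\Delta$ because $\bar F(x)\ge0$. Since $\phi$ is non-decreasing, the integrand is pointwise non-decreasing, and integrating preserves this.

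\textbf{Monotonicity under a first-order stochastic improvement.} A first-order stochastic improvement $F'\succeq F$ means $\bar F'(x)\ge\bar F(x)$ for every $x$. The same pointwise argument then gives $G(\lambda,\Delta,F')\ge G(\lambda,\Delta,F)$. The ``busier or better-mark'' sentence is just a restatement of these three facts.

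\textbf{Quasi-concavity in $\Delta$.} For each fixed $x$, the map $\Delta\mapsto\phi(\lambda\Delta\bar F(x))$ is concave, being a concave function composed with a non-negative linear map. An integral of concave functions is concave, so $G$ is concave in $\Delta$. Subtracting the linear term $c_w\Delta$ keeps $Q_W$ concave in $\Delta$, and concavity implies quasi-concavity.

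\textbf{Interior optimal window.} I would differentiate under the integral sign; this is justified because the $\Delta$-derivative $\lambda\bar F(x)e^{-\lambda\Delta\bar F(x)}$ is bounded by $\lambda$ on a compact interval. This gives
\[
\partial_\Delta Q_W=\lambda\int_{v_0}^{v_{\max}}\bar F(x)\,e^{-\lambda\Delta\bar F(x)}\,dx-c_w ,
\]
which is continuous and non-increasing in $\Delta$. Its value at $\Delta=0$ is $\lambda\,\E[(V-v_0)_+]-c_w$, and the hypothesis that ``the marginal arrival benefit at $\Delta=0$ exceeds $c_w$'' is read as exactly this quantity being positive. As $\Delta\to\infty$, dominated convergence sends the integral to $0$: the integrand vanishes wherever $\bar F>0$ and is identically zero elsewhere. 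Hence the derivative tends to $-c_w<0$. By the intermediate value theorem there is a $\Delta^\star\in(0,\infty)$ with $\partial_\Delta Q_W(\Delta^\star)=0$, and concavity makes $\Delta^\star$ a global maximiser.

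\textbf{Expected obstacle.} The only delicate step is this last one: the existence of an \emph{interior} maximiser genuinely needs $c_w>0$. When $c_w=0$, $Q_W$ is non-decreasing in $\Delta$ and the supremum is approached only as $\Delta\to\infty$. I would therefore state the interior-window conclusion under $c_w>0$, which is implicit in calling $c_w$ a delay price.
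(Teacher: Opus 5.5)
Your proof is correct and follows the paper's route. Both arguments use pointwise monotonicity of the integrand $1-e^{-\lambda\Delta(1-F^V(x))}$ for the three monotonicity claims, and a non-increasing $\partial_\Delta Q_W$ for quasi-concavity and the interior window; your concavity-by-composition argument is the same fact. Your explicit check that $\partial_\Delta Q_W\to -c_w$ as $\Delta\to\infty$, which needs $c_w>0$, fills a step the paper's ``crosses zero in the interior iff'' skips: when $c_w=0$ the derivative stays non-negative, so no interior maximiser exists.
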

\begin{proof}
Write the gross value as
$G(\lambda,\Delta)=v_0+\int_{v_0}^{v_{\max}}\bigl[1-e^{-\lambda\Delta(1-F^V(x))}\bigr]dx$.
For each $x$ the integrand $1-e^{-\lambda\Delta(1-F^V(x))}$ is non-decreasing in
$\lambda$ and in $\Delta$ (its exponent's magnitude grows), so $G$ is
non-decreasing in both; a first-order improvement lowers $F^V$ pointwise, raising
each integrand, hence $G$. For quasi-concavity of $Q_W=G-c_w\Delta$ in $\Delta$,
$\partial_\Delta G=\int_{v_0}^{v_{\max}}\lambda(1-F^V(x))e^{-\lambda\Delta(1-F^V(x))}dx$
is positive and strictly decreasing in $\Delta$ (each summand decreases), so
$\partial_\Delta Q_W=\partial_\Delta G-c_w$ crosses zero at most once from above;
it does so in the interior iff $\partial_\Delta G|_{\Delta=0}=\lambda\int_{v_0}^{v_{\max}}(1-F^V(x))dx>c_w$.
\end{proof}

For a persistently mobile receiver the natural object is not a fixed window but a
stationary acceptance threshold. The following result casts waiting as a
search problem with recall and yields a reservation value that separates
acceptance from continued waiting.

\begin{theorem}[Reservation value of the waiting option]
\label{thm:reservation}
Consider a receiver that, with full recall, may either exchange at its current
best realisable value $b$ or continue waiting for marked-Poisson arrivals (rate
$\lambda$, i.i.d.\ marks $V\sim F^V$ with density $f>0$ on $[v_{\min},v_{\max}]$)
at delay price $c_w$ per unit time. Define
$T(\rho)=\int_{\rho}^{v_{\max}}(1-F^V(x))\,dx$. If
$0<c_w<\lambda\,T(v_{\min})=\lambda\,\E[V-v_{\min}]$, then the optimal policy
within stationary threshold policies is to exchange iff $b\ge\rho^\star$, where
$\rho^\star\in(v_{\min},v_{\max})$ is the unique root of
\begin{equation}
\lambda\int_{\rho^\star}^{v_{\max}}\bigl(1-F^V(x)\bigr)\,dx=c_w.
\label{eq:reseqn}
\end{equation}
Moreover the reservation value equals the optimal continuation value,
$\rho^\star=J(\rho^\star)$, where $J(\rho)$ is the expected net payoff of the
threshold-$\rho$ policy. If instead $c_w\ge\lambda\,T(v_{\min})$, waiting is never
worthwhile and $\rho^\star=v_{\min}$.
\end{theorem}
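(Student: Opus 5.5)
The plan is to compute the payoff $J(\rho)$ of a stationary threshold policy in closed form, optimise it over $\rho$, and then read off the reservation equation and the fixed-point identity.

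\textbf{Step 1: payoff of a threshold policy.} Fix a threshold $\rho\in[v_{\min},v_{\max})$ and consider the policy ``wait until the first arrival whose mark is at least $\rho$, then exchange''. Under full recall, earlier lower marks never beat the accepted one, so only the accepted mark matters. By Poisson thinning, qualifying arrivals form a Poisson process of rate $\lambda p(\rho)$, where $p(\rho)=1-F^V(\rho)>0$. The waiting time is therefore exponential with mean $1/(\lambda p(\rho))$. The accepted mark is independent of the waiting time and has law $V\mid V\ge\rho$. Hence
\[
J(\rho)=\E[V\mid V\ge\rho]-\frac{c_w}{\lambda p(\rho)}.
\]
Using the tail-integral identity as in \Cref{prop:wait}, $\E[V\mid V\ge\rho]=\rho+T(\rho)/p(\rho)$. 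This gives
\[
J(\rho)=\rho+\frac{\lambda T(\rho)-c_w}{\lambda p(\rho)}.
\]
This form immediately shows that $J(\rho)-\rho$ has the sign of $\phi(\rho):=\lambda T(\rho)-c_w$.

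\textbf{Step 2: existence and uniqueness of the root.} Since $f>0$, we have $T'(\rho)=-(1-F^V(\rho))<0$ on $(v_{\min},v_{\max})$. So $T$ is continuous and strictly decreasing, with $T(v_{\max})=0$ and $\lambda T(v_{\min})>c_w>0$. The intermediate value theorem then gives a unique $\rho^\star\in(v_{\min},v_{\max})$ with $\phi(\rho^\star)=0$, which is \eqref{eq:reseqn}. Substituting into Step 1 yields $J(\rho^\star)=\rho^\star$.

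\textbf{Step 3: optimality among threshold policies.} Differentiate $J$ using $T'=-p$ and $p'=-f$:
\[
J'(\rho)=\frac{f(\rho)\,\phi(\rho)}{\lambda p(\rho)^2}\cdot\frac{1}{1}\quad\text{(after cancelling }1-\tfrac{\lambda p\cdot p}{\lambda p^2}=0\text{)}.
\]
Since $\phi$ is strictly decreasing with its zero at $\rho^\star$, $J$ increases up to $\rho^\star$ and decreases after it. So $\rho^\star$ maximises $J$, with value $J^\star=\rho^\star$.

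\textbf{Step 4: the stop/continue comparison.} With current best value $b$ held under recall, continuing with threshold $\rho\ge b$ yields $J(\rho)\le\rho^\star$, because any arrival below $\rho$ is dominated. Stopping yields $b$. So exchanging is optimal iff $b\ge\rho^\star=\sup_\rho J(\rho)$.

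I expect the main obstacle here. Recall means the continuation payoff is really $\E[\max\{b,\cdot\}]$ rather than $J$. I would handle it by the one-step-look-ahead argument. From state $b$, the net rate of gain from waiting an instant is $\lambda\E[(V-b)_+]-c_w=\lambda T(b)-c_w=\phi(b)$. This is $\le0$ exactly when $b\ge\rho^\star$. The set $\{b\ge\rho^\star\}$ is closed under continuation because $b$ only increases, so the monotone-case criterion applies and stopping at $\rho^\star$ is optimal.

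\textbf{Degenerate case.} If $c_w\ge\lambda T(v_{\min})$, then $\phi\le0$ on $[v_{\min},v_{\max}]$. Hence $J(\rho)\le\rho$ and $J$ is non-increasing, so the best threshold is $\rho^\star=v_{\min}$ and waiting is never worthwhile.
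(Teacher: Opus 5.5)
Your proposal is correct and follows the paper's proof essentially step for step. The shared steps are: the closed form for $J(\rho)$ via Poisson thinning and $\E[V\mid V\ge\rho]=\rho+T(\rho)/p(\rho)$; the derivative $J'=f\phi/(\lambda p^2)$, whose sign is that of $\lambda T-c_w$; uniqueness of the root from strict monotonicity of $T$; the fixed point $J(\rho^\star)=\rho^\star$; and the degenerate case.

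The "obstacle" you raise in Step 4 does not arise within threshold policies. For any threshold $\rho>b$, every accepted mark exceeds $b$, so recall never binds and the continuation payoff is exactly $J(\rho)$. Your one-step-look-ahead argument is therefore not needed for this theorem. It is, however, a valid alternative to the paper's memorylessness argument in \Cref{lem:global} for optimality over all policies. The infinite-horizon monotone-case theorem applies there because the reward (best value in hand minus $c_w t$) is bounded above and tends to $-\infty$ almost surely.
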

\begin{proof}
Under the threshold-$\rho$ policy the acceptable-offer stream (marks $\ge\rho$) is
Poisson with rate $\lambda(1-F^V(\rho))$, so the expected waiting time to
acceptance is $1/[\lambda(1-F^V(\rho))]$ and the expected accepted value is
$\E[V\mid V\ge\rho]=\rho+T(\rho)/(1-F^V(\rho))$. Hence the expected net payoff is
\[
J(\rho)=\rho+\frac{T(\rho)-c_w/\lambda}{1-F^V(\rho)} .
\]
Writing $Q=1-F^V(\rho)$ (so $Q'=-f(\rho)$) and $T'(\rho)=-(1-F^V(\rho))=-Q$,
\[
J'(\rho)=1+\frac{T'(\rho)\,Q-\bigl(T(\rho)-c_w/\lambda\bigr)Q'}{Q^2}
=1+\frac{-Q^2+f(\rho)\bigl(T(\rho)-c_w/\lambda\bigr)}{Q^2}
=\frac{f(\rho)\bigl(T(\rho)-c_w/\lambda\bigr)}{Q^2}.
\]
Since $f>0$ and $Q>0$, $J'(\rho)$ has the sign of $T(\rho)-c_w/\lambda$. Because
$T$ is continuous and strictly decreasing from $T(v_{\min})=\E[V-v_{\min}]$ to
$T(v_{\max})=0$, the condition $c_w/\lambda\in(0,T(v_{\min}))$ guarantees a unique
$\rho^\star$ with $T(\rho^\star)=c_w/\lambda$, and $J'>0$ for $\rho<\rho^\star$,
$J'<0$ for $\rho>\rho^\star$; thus $J$ is maximised at $\rho^\star$, which
therefore is the optimal stationary threshold. At $\rho=\rho^\star$ the numerator
$T(\rho^\star)-c_w/\lambda=0$, so $J(\rho^\star)=\rho^\star$: the reservation
value coincides with the value of the option, and it is optimal to exchange iff
$b\ge\rho^\star$. If $c_w/\lambda\ge T(v_{\min})$, then $T(\rho)-c_w/\lambda\le0$
for all $\rho$, so $J'\le0$ everywhere and the optimum is at $\rho^\star=v_{\min}$,
i.e.\ accept immediately.
\end{proof}

\Cref{thm:reservation} optimises over stationary threshold policies. Because the
arrival process is memoryless and marks are i.i.d.\ with recall, the restriction
is without loss: the reservation policy is optimal among \emph{all} policies.

\begin{lemma}[Global optimality of the reservation policy]
\label{lem:global}
In the search problem of \Cref{thm:reservation}, the policy ``exchange iff the
available value is at least $\rho^\star$'' is optimal within the class of all
(history-dependent, non-anticipating) policies, and the value of optimally
continuing to wait is the constant $\rho^\star$, independent of calendar time and
history.
\end{lemma}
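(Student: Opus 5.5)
Our strategy is to characterise the optimal value of the search problem of \Cref{thm:reservation} directly, through a stationary Bellman (fixed-point) equation. We then verify that the reservation policy attains it, with a martingale argument supplying the upper bound over all non-anticipating policies.

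\textbf{Step 1 (value depends only on the state $b$).} With full recall, the state is the best available value $b$. Poisson arrivals are memoryless and marks are i.i.d., so the conditional law of the future, given any history, depends on that history only through $b$. Hence the optimal value $\Phi$ over all history-dependent policies is a function $\Phi(b)$ alone. Moreover, $\Phi(b)=\max\{b,\,C(b)\}$, where $C(b)$ is the value of continuing.

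\textbf{Step 2 (the candidate satisfies the Bellman equation).} Set $\Psi(b)=\max\{b,\rho^\star\}$, the value of the reservation policy: accept immediately if $b\ge\rho^\star$, otherwise wait and accept the first arrival whose mark is at least $\rho^\star$. For $b<\rho^\star$, this payoff is $J(\rho^\star)=\rho^\star$ by \Cref{thm:reservation}; since $b<\rho^\star$ is dominated by any accepted mark, recall plays no role here. We check $\Psi$ against the short-time (infinitesimal-generator) form of the Bellman equation:
\[
0=\max\Bigl\{b-\Psi(b),\ \lambda\int\bigl(\Psi(\max\{b,x\})-\Psi(b)\bigr)dF^V(x)-c_w\Bigr\}.
\]
For $b\ge\rho^\star$, the jump term equals $\lambda\int_b^{v_{\max}}(1-F^V(x))\,dx=\lambda T(b)\le\lambda T(\rho^\star)=c_w$, so the second entry is non-positive and the first is zero. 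For $b<\rho^\star$, the jump term is $\lambda T(\rho^\star)=c_w$, so the second entry is zero, while $b-\Psi(b)<0$. Thus the continuation value equals $\rho^\star$ whenever $b<\rho^\star$, independently of calendar time and history.

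\textbf{Step 3 (verification for arbitrary policies).} Let $\sigma$ be any non-anticipating stopping time with respect to the marked-point-process filtration, and let $b_t$ denote the running maximum. Consider
\[
\Psi(b_{t\wedge\sigma})-c_w\,(t\wedge\sigma).
\]
By the compensator formula for marked Poisson processes, its drift is
\[
\lambda\int\bigl(\Psi(\max\{b,x\})-\Psi(b)\bigr)dF^V(x)-c_w\le0
\]
by Step 2, so it is a supermartingale. Its increments are bounded by $v_{\max}-v_{\min}$ plus $c_w$ times elapsed time. To pass to the limit, restrict to policies with $\E\sigma<\infty$; policies with $\E\sigma=\infty$ incur infinite expected delay cost against bounded rewards and are therefore never better than immediate acceptance. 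Optional stopping then gives
\[
\E\bigl[b_\sigma-c_w\sigma\bigr]\le\E\bigl[\Psi(b_\sigma)-c_w\sigma\bigr]\le\Psi(b_0).
\]
Equality holds for the reservation policy, since its stopping time has geometric-type tails with finite mean $1/[\lambda(1-F^V(\rho^\star))]$. Hence $\Phi=\Psi$, and the continuation value is the constant $\rho^\star$.

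The main obstacle is Step 3, specifically justifying optional stopping for unbounded stopping times, together with the case $c_w\ge\lambda T(v_{\min})$. In that case $\rho^\star=v_{\min}$ and the drift is non-positive for every $b$, so the same argument shows that immediate acceptance is optimal.
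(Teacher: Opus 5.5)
Your proof is correct, but it takes a genuinely different route from the paper's.

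\textbf{The paper's route.} It argues by self-similarity. Memorylessness of the Poisson stream and i.i.d.\ marks make the post-history continuation problem a copy of the original, so the continuation value is a constant $\rho_c$. The Bellman equation $\max\{x,\rho_c\}$ then forces a threshold policy, and the optimisation of $J$ over thresholds in \Cref{thm:reservation} identifies $\rho_c=\rho^\star$.

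\textbf{Your route.} You guess the value function $\Psi(b)=\max\{b,\rho^\star\}$ and check the jump-process variational inequality, using only $\lambda T(\rho^\star)=c_w$ and the monotonicity of $T$. You then close with a supermartingale/optional-stopping verification over all non-anticipating stopping times. Your Step~1 is in fact unnecessary, since Step~3 bounds every such policy directly.

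\textbf{What each buys.} The paper's argument is shorter and reuses the calculus already done on $J$. However, it tacitly presupposes that an optimal policy exists and that a dynamic-programming principle holds in an infinite-horizon, continuous-time problem. Its ``constant continuation value'' is also literally true only for $b\le\rho^\star$. Because recall makes the running maximum the state, for $b>\rho^\star$ waiting one more arrival is worth $b+T(b)-c_w/\lambda$, which depends on $b$. Your verification approach needs neither existence nor a Bellman principle, and it handles recall explicitly. It also yields exactly the precise form of the claim: continuation value $\rho^\star$ on $\{b<\rho^\star\}$. The cost is the compensator machinery.

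\textbf{A small tidy-up.} The ``bounded increments plus $\E\sigma<\infty$'' criterion you invoke is the discrete-time optional-stopping condition. In continuous time it is cleaner to start from $\E[\Psi(b_{t\wedge\sigma})-c_w(t\wedge\sigma)]\le\Psi(b_0)$ and let $t\to\infty$. Use bounded convergence for the first term and monotone convergence for the cost term. This works for any almost surely finite $\sigma$, so the restriction to $\E\sigma<\infty$ is not needed. Alternatively, apply discrete optional stopping to the chain embedded at arrival epochs.
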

\begin{proof}
Since arrivals are Poisson and marks i.i.d.\ with full recall, the receiver's
continuation problem after any history in which it has not yet exchanged is a
verbatim copy of the original problem: the residual arrival process is again
Poisson$(\lambda)$ by the memoryless property, and past rejected offers are
irrelevant except through recall, which only raises the current best. Hence the
value of optimally continuing to wait is a constant $\rho_c$ independent of time
and history. The Bellman optimality equation for an available value $x$ reads
$\max\{x,\rho_c\}$, so it is optimal to exchange iff $x\ge\rho_c$; this is a
threshold policy, and by \Cref{thm:reservation} the best threshold is
$\rho^\star$ with value $J(\rho^\star)=\rho^\star$. Therefore $\rho_c=\rho^\star$
and the reservation policy attains the optimum over all policies.
\end{proof}

\begin{proposition}[Comparative statics of the reservation value]
\label{prop:reservationcs}
Let $\rho^\star$ solve \eqref{eq:reseqn} in the interior regime. Then
$\rho^\star$ is strictly increasing in the arrival rate $\lambda$, strictly
decreasing in the delay price $c_w$, and non-decreasing under a first-order
stochastic improvement of the mark distribution $F^V$.
\end{proposition}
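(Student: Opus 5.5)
The plan is to treat \eqref{eq:reseqn} as an implicit equation $\Phi(\rho,\lambda,c_w,F^V):=\lambda T(\rho)-c_w=0$ and read off the signs by monotonicity. I will avoid differentiating in $F^V$ and use only that $T$ is continuous and strictly decreasing. In the interior regime $f>0$, so $T'(\rho)=-(1-F^V(\rho))<0$ on $[v_{\min},v_{\max})$. Hence $\rho\mapsto\lambda T(\rho)$ is a strictly decreasing bijection from $[v_{\min},v_{\max}]$ onto $[0,\lambda T(v_{\min})]$. This gives the explicit inverse $\rho^\star=T^{-1}(c_w/\lambda)$, which already reduces everything to monotonicity of $T^{-1}$ and of the level $c_w/\lambda$.

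For the arrival rate and delay price I will argue as follows. The level $c_w/\lambda$ is strictly decreasing in $\lambda$ and strictly increasing in $c_w$. Since $T^{-1}$ is strictly decreasing on $(0,T(v_{\min}))$, $\rho^\star$ is strictly increasing in $\lambda$ and strictly decreasing in $c_w$. As a cross-check I will also record the implicit-function derivatives, $\partial\rho^\star/\partial\lambda=T(\rho^\star)/[\lambda(1-F^V(\rho^\star))]>0$ and $\partial\rho^\star/\partial c_w=-1/[\lambda(1-F^V(\rho^\star))]<0$. Both are well defined because $\rho^\star<v_{\max}$ forces $1-F^V(\rho^\star)>0$. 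One caveat: as $\lambda$ or $c_w$ varies, the parameters must stay inside the interior regime $c_w<\lambda T(v_{\min})$ for strictness to hold. I will state this explicitly.

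For a first-order stochastic improvement $\tilde F^V\le F^V$ pointwise, the tail integrand satisfies $1-\tilde F^V(x)\ge1-F^V(x)$, so $\tilde T(\rho)\ge T(\rho)$ for every $\rho$. Evaluating at the old root gives $\lambda\tilde T(\rho^\star)\ge\lambda T(\rho^\star)=c_w$. Because $\tilde T$ is decreasing, its root $\tilde\rho^\star$ with $\lambda\tilde T(\tilde\rho^\star)=c_w$ must satisfy $\tilde\rho^\star\ge\rho^\star$. This holds only weakly, since the improvement may leave the tail above $\rho^\star$ unchanged.

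The main obstacle is the bookkeeping for the mark improvement. The improved law may have a different support, so I will extend both $F$'s to a common interval $[\min v_{\min},\max v_{\max}]$, with $F=0$ below the support and $F=1$ above it. On that interval $T$ is still continuous and non-increasing, and it is strictly decreasing wherever $F<1$. I will then choose the root as $\sup\{\rho:\lambda\tilde T(\rho)\ge c_w\}$, so the comparison survives even where $\tilde T$ is flat, with the interior hypotheses ensuring this supremum is the unique root of \eqref{eq:reseqn}.
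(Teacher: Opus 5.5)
Your proposal is correct and follows essentially the same route as the paper. The paper gets the $\lambda$ and $c_w$ signs by implicit differentiation of $\lambda T(\rho^\star)=c_w$, producing exactly the derivatives you record as a cross-check. It handles the first-order stochastic improvement by the same pointwise comparison $T_2\ge T_1$ together with monotonicity of $\rho\mapsto\lambda T(\rho)$. Your other additions are harmless refinements: the primary argument via the strictly decreasing inverse $T^{-1}(c_w/\lambda)$, the explicit caveat that parameters must stay in the interior regime $c_w<\lambda T(v_{\min})$, and the support bookkeeping for the improved law. The flat-region worry is actually moot, since $T$ is strictly decreasing wherever $T>0$ and $c_w>0$.
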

\begin{proof}
Implicit differentiation of $\lambda T(\rho^\star)=c_w$, using
$T'(\rho^\star)=-(1-F^V(\rho^\star))<0$, gives
\[
\frac{d\rho^\star}{d\lambda}=\frac{-T(\rho^\star)}{\lambda T'(\rho^\star)}
=\frac{T(\rho^\star)}{\lambda\,(1-F^V(\rho^\star))}>0,\qquad
\frac{d\rho^\star}{dc_w}=\frac{1}{\lambda T'(\rho^\star)}
=\frac{-1}{\lambda\,(1-F^V(\rho^\star))}<0.
\]
For a first-order stochastic improvement $F^V_2\le F^V_1$ pointwise, one has
$T_2(\rho)\ge T_1(\rho)$ for all $\rho$; since $\rho\mapsto\lambda T(\rho)$ is
decreasing, the solution of $\lambda T(\rho)=c_w$ moves right, so
$\rho^\star_2\ge\rho^\star_1$.
\end{proof}

\begin{figure}[t]
  \centering
  \includegraphics[width=\linewidth]{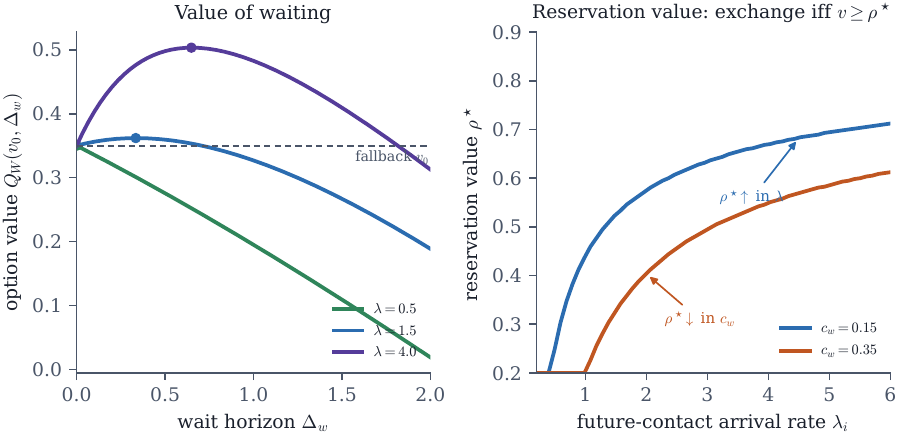}
  \caption{\emph{Left:} the value of waiting $Q_W(v_0,\Delta_w)$
  (\Cref{prop:wait}) as a function of the wait horizon for several arrival rates;
  waiting is attractive only when arrivals are frequent enough to offset delay
  cost. \emph{Right:} the search-theoretic reservation value $\rho^\star$
  (\Cref{thm:reservation}) increases in the arrival rate $\lambda_i$ and decreases
  in the delay price $c_w$ (\Cref{prop:reservationcs}); the learner exchanges now
  iff the realisable value on offer reaches $\rho^\star$.}
  \label{fig:waiting}
\end{figure}

\Cref{thm:reservation} completes the reservation-value picture of
\Cref{prop:snell}: the waiting component of $\rho=W$ is itself a reservation
value, so the full optimal policy exchanges as soon as the certified realisable
value exceeds the larger of the probing continuation and the waiting reservation
$\rho^\star$.

% ==================================================================== section 8
\section{Myopic Optimality of the One-Step Rule}
\label{sec:myopic}

Exact evaluation of the probe continuation \eqref{eq:qprobe} requires the
high-dimensional expectations $\E[V(s^+)]$ and $\E[V(s^{\mathrm{drop}})]$. PROSE
replaces them with the one-step (myopic) surrogates of \Cref{sec:certify,sec:voi}.
We now show when this is not merely tractable but optimal.

Fix the waiting option by folding its value into a within-stage outside option
$\omega=\max\{v_0,Q_R,\rho^\star\}$, and consider the resulting probe-versus-stop
subproblem. Let $B_n$ be the guaranteed value \eqref{eq:vpi} at probing stage $n$
and let $C_n=\sum_{m<n}(c_p+\ell_m)$ be the cumulative acquisition-and-opportunity
cost incurred to reach stage $n$, where $\ell_m\ge0$ is the realised opportunity
loss at stage $m$. The reward from stopping at stage $n$ is $Y_n=B_n-C_n$. The
\emph{one-step-look-ahead} (OSLA) rule stops at the first stage with
$\E[Y_{n+1}-Y_n\mid\mathcal F_n]\le0$, i.e.
\begin{equation}
\E[B_{n+1}-B_n\mid\mathcal F_n]\ \le\ \kappa_n:=c_p+\E[\ell_n\mid\mathcal F_n].
\label{eq:osla}
\end{equation}

\begin{definition}[Monotone regime]
\label{def:monotone}
The probing subproblem is \emph{monotone} if the OSLA stopping set
$A^{\mathrm{OSLA}}_n=\{\E[B_{n+1}-B_n\mid\mathcal F_n]\le\kappa_n\}$ is absorbing:
once entered it is never left along any probing continuation.
\end{definition}

\begin{lemma}[Sufficient conditions for monotonicity]
\label{lem:monotone}
Suppose along every probing continuation (i) the expected one-step improvement in
guaranteed value $\E[B_{n+1}-B_n\mid\mathcal F_n]$ is non-increasing in $n$, and
(ii) the expected per-stage cost $\kappa_n$ is non-decreasing in $n$. Then the
subproblem is monotone. In particular, absent new peer arrivals and with
negligible drift, \Cref{lem:diminish} gives $\VPI_n^{\mathrm{ub}}\le2S_{\max}r_n$
non-increasing, which bounds and forces down the expected improvement, so (i)
holds; and if survival is non-increasing in elapsed contact time, then
$\Omega$ and hence $\kappa_n$ are non-decreasing, so (ii) holds. Both are promoted
by high hazard, which shrinks survivable uncertainty and accelerates opportunity
loss.
\end{lemma}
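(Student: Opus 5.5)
The core claim is a one-line chaining argument along a single probing continuation. Write $\Delta_n:=\E[B_{n+1}-B_n\mid\mathcal F_n]$, so that $A^{\mathrm{OSLA}}_n=\{\Delta_n\le\kappa_n\}$. Suppose the state at stage $n$ lies in $A^{\mathrm{OSLA}}_n$ and the learner continues probing to stage $n+1$. Hypothesis (i) gives $\Delta_{n+1}\le\Delta_n$. Hypothesis (ii) gives $\kappa_n\le\kappa_{n+1}$. Hence
\[
\Delta_{n+1}\le\Delta_n\le\kappa_n\le\kappa_{n+1},
\]
so the state at stage $n+1$ lies in $A^{\mathrm{OSLA}}_{n+1}$. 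Induction on the number of further probes then shows the set is never left, which is exactly \Cref{def:monotone}.

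I will read (i) and (ii) pathwise, along every realised continuation, because absorption must hold on every continuation and not only on average. Under that reading the chain above needs no further argument.

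The remaining work is the \emph{in particular} clause, and I treat its two parts separately.

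For (ii), take $\ell_n$ to be the realised loss from the incumbent dropping. Its conditional mean is then $\Omega_b$ from \eqref{eq:opp}, namely $\bigl(1-\widehat S_{ib}(\tau_p)\bigr)\bigl[\underline V_{ib,n}-v_0\bigr]_+$. If survival is non-increasing in elapsed contact time, the factor $1-\widehat S$ is non-decreasing. I will also need the protected surplus $[\underline V_{ib,n}-v_0]_+$ not to fall. This holds if $B_n$ is non-decreasing along continuations, which I will state explicitly as an added assumption. Under these conditions $\kappa_n=c_p+\E[\ell_n\mid\mathcal F_n]$ is non-decreasing.

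For (i), I will use \Cref{lem:diminish}. With no new arrivals, exact survival and no drift, it gives $0\le\Delta_n\le\VPI_n^{\mathrm{ub}}\le 2S_{\max}r_n$. Here $r_n$ is deterministically decreasing to zero by \Cref{lem:width}.

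The main obstacle is that this envelope only bounds $\Delta_n$. It does not by itself make $\Delta_n$ itself monotone, so (i) does not follow for free. To get around this, I will apply the chaining argument with $\Delta_n$ replaced by its envelope. Define the surrogate stopping set $\{2S_{\max}r_n\le\kappa_n\}$. This set is absorbing, because $r_n$ decreases and $\kappa_n$ increases. It is also contained in $A^{\mathrm{OSLA}}_n$, since $\Delta_n\le 2S_{\max}r_n$. So in the stated regime I will replace the literal condition (i) by this envelope-based stopping set, and say in the proof that this is what ``forces down'' means.

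Finally, the remark that high hazard promotes both conditions follows from earlier monotonicity results. \Cref{prop:indexmono} and \Cref{thm:hazard} show that higher hazard lowers survivable uncertainty, which shrinks $\Delta_n$. Higher hazard also raises $1-\widehat S$, which raises $\kappa_n$.
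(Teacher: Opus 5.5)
Your central argument is the paper's own. With $\Delta_n:=\E[B_{n+1}-B_n\mid\mathcal F_n]$, the chain $\Delta_{n+1}\le\Delta_n\le\kappa_n\le\kappa_{n+1}$ gives $A^{\mathrm{OSLA}}_n\subseteq A^{\mathrm{OSLA}}_{n+1}$ along every continuation, which is \Cref{def:monotone}.

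You depart from the paper on the ``in particular'' clause, and your suspicion there is justified. The paper treats both structural claims as immediate. It asserts that shrinking intervals make $U_n^{\max}-B_n$ non-increasing, which is false in general: after a low score on the incumbent, re-centring can lower $B_n$ by more than $U_n^{\max}$ once a rival's upper bound becomes the maximum. It also lets $\Omega$ inherit monotonicity from survival alone.

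For (ii), note that $[\underline V_{ib,n}-v_0]_+=B_n-v_0$ because $b$ maximises the lower bound. Your extra hypothesis that $B_n$ is non-decreasing is therefore exactly what the bracket needs. You still need one more condition. The factor $1-\widehat S_{ib}(\tau_p)$ is evaluated at whichever peer is incumbent at stage $n$. Survival that falls in elapsed time for each fixed peer does not stop this factor from dropping when the incumbent switches to a lower-hazard peer.

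The genuine gap is your treatment of (i). An absorbing set $E_n=\{2S_{\max}r_n\le\kappa_n\}$ with $E_n\subseteq A^{\mathrm{OSLA}}_n$ does not make $A^{\mathrm{OSLA}}$ absorbing: a state in $A^{\mathrm{OSLA}}_n\setminus E_n$ may leave $A^{\mathrm{OSLA}}$ at the next stage. So in the stated regime you have not shown that (i) holds or that the subproblem is monotone. Consequently \Cref{thm:myopic}(a) cannot be invoked on the strength of your argument. Calling this substitution what ``forces down'' means changes the statement rather than proving it.

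What $E_n$ does legitimately give is a different conclusion. On $E_n$, absorption yields $\E[Y_{m+1}-Y_m\mid\mathcal F_m]=\Delta_m-\kappa_m\le 0$ for every $m\ge n$. Hence $(Y_m)_{m\ge n}$ is a supermartingale, and by optional stopping over the finite horizon, halting on $E_n$ is optimal. Present the result as that sufficient condition for optimal stopping, or keep (i) as an explicit hypothesis in the regime.

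Two smaller points:
\begin{itemize}
\item The claim $0\le\Delta_n$ is not provided by \Cref{lem:diminish}: a probe can lower the incumbent's lower bound in expectation when its empirical mean overestimates. It holds only under your added assumption on $B_n$, and your argument does not need it.
\item The hazard remark remains heuristic. \Cref{prop:indexmono} and \Cref{thm:hazard} concern $I_{ij}$ and $Q_P$, not $\Delta_n$ or $\kappa_n$.
\end{itemize}
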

\begin{proof}
Under (i) the left-hand side of \eqref{eq:osla} is non-increasing and under (ii)
the right-hand side $\kappa_n$ is non-decreasing; hence once
$\E[B_{n+1}-B_n\mid\mathcal F_n]\le\kappa_n$ holds it continues to hold, so
$A^{\mathrm{OSLA}}_n\subseteq A^{\mathrm{OSLA}}_{n+1}$, which is
\Cref{def:monotone}. The two structural claims are immediate: shrinking intervals
make $U_n^{\max}-B_n$ non-increasing, bounding the achievable improvement; and
$\Omega_j=(1-\widehat S_{ij}(\tau_p))[\cdot]_+$ is non-decreasing when survival
$\widehat S_{ij}(\tau_p)$ falls with consumed contact time.
\end{proof}

\begin{theorem}[Myopic optimality and soundness of the confidence-safe rule]
\label{thm:myopic}
Consider the probe-versus-stop subproblem with bounded rewards and finitely many
stages (\Cref{prop:horizon}). Then:
\begin{enumerate}
\item[(a)] \emph{(Optimality of OSLA.)} If the subproblem is monotone
(\Cref{def:monotone}), the one-step-look-ahead rule \eqref{eq:osla} is optimal.
\item[(b)] \emph{(Soundness of the PROSE rule.)} Because
$\VPI_n^{\mathrm{ub}}\ge\E[B_{n+1}-B_n\mid\mathcal F_n]$ and, under the loss
hypothesis of \Cref{thm:vpi}, $\Omega_b\le\E[\ell_n\mid\mathcal F_n]$, the PROSE
do-not-probe set $A^{\mathrm M}_n=\{\VPI_n^{\mathrm{ub}}\le c_p+\Omega_b\}$
satisfies $A^{\mathrm M}_n\subseteq A^{\mathrm{OSLA}}_n$. Hence in the monotone
regime every state in which PROSE halts probing is one in which halting is
optimal: PROSE never stops prematurely.
\item[(c)] \emph{(Bounded over-acquisition.)} PROSE's only possible inefficiency
is continuing to probe in states of $A^{\mathrm{OSLA}}_n\setminus A^{\mathrm
M}_n$; the number of such extra probes is at most the remaining horizon
$\bar n-n$, so the regret of PROSE relative to the optimal policy is at most
$(\bar n-n)\,(c_p+\max_m\ell_m)$.
\end{enumerate}
\end{theorem}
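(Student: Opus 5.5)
For part (a) I will use the classical monotone-case argument of \citet{chow1971} for finite-horizon stopping of the reward sequence $Y_n=B_n-C_n$. By \Cref{prop:horizon} there are at most $\bar n$ stages. By \Cref{ass:bounded} each $Y_n$ is integrable, so the Snell envelope $Z_n=\max\{Y_n,\E[Z_{n+1}\mid\mathcal F_n]\}$ is well defined by backward induction, exactly as in \Cref{thm:existence} and \Cref{prop:snell}.

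The claim to prove is that on the OSLA set $A^{\mathrm{OSLA}}_n$ we have $Z_n=Y_n$. I will argue by backward induction on $n$. At the last stage $\bar n$ the equality $Z_{\bar n}=Y_{\bar n}$ holds trivially. For the inductive step, suppose $Z_{n+1}=Y_{n+1}$ on $A^{\mathrm{OSLA}}_{n+1}$. Since the set is absorbing (\Cref{def:monotone}), on $A^{\mathrm{OSLA}}_n$ the next state lies in $A^{\mathrm{OSLA}}_{n+1}$ almost surely. Hence
\[
\E[Z_{n+1}\mid\mathcal F_n]=\E[Y_{n+1}\mid\mathcal F_n]\le Y_n,
\]
where the inequality is the OSLA condition \eqref{eq:osla}. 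It follows that $Z_n=Y_n$ on $A^{\mathrm{OSLA}}_n$.

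Off the set, $\E[Y_{n+1}\mid\mathcal F_n]>Y_n$. Because $Z_{n+1}\ge Y_{n+1}$, this gives $\E[Z_{n+1}\mid\mathcal F_n]>Y_n$, so continuing is strictly better. The optimal rule therefore coincides with the first entrance time into $A^{\mathrm{OSLA}}$, which proves (a). A technical point I must handle here is that $C_n$ is $\mathcal F_n$-measurable, so the increment $Y_{n+1}-Y_n=(B_{n+1}-B_n)-(c_p+\ell_n)$ has conditional mean $\E[B_{n+1}-B_n\mid\mathcal F_n]-\kappa_n$. This matches \eqref{eq:osla} term for term.

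For part (b) I need the inclusion $A^{\mathrm M}_n\subseteq A^{\mathrm{OSLA}}_n$. The first step is the inequality $\E[B_{n+1}-B_n\mid\mathcal F_n]\le\VPI_n^{\mathrm{ub}}$. I will argue it as in \Cref{thm:vpi}, noting that $B_{n+1}\le\max\{v_0,U_n^{\max}\}$ holds on the coverage event. The reason is that, absent arrivals, each new lower bound $\underline V_{ij,n+1}$ cannot exceed the true value $v_{ij}$, and $v_{ij}\le\overline V_{ij,n}\le U_n^{\max}$. Taking conditional expectations then gives $\E[B_{n+1}-B_n\mid\mathcal F_n]\le[U_n^{\max}-B_n]_+$.

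I expect this step to be the main obstacle. The bound holds only on the high-probability event of \Cref{lem:stale} and \Cref{ass:surv}, so I will state (b) conditionally on that event, or else treat the displayed inequality as the hypothesis the theorem asserts (``because''). With it in hand, the loss hypothesis $\Omega_b\le\E[\ell_n\mid\mathcal F_n]$ gives
\[
\VPI_n^{\mathrm{ub}}\le c_p+\Omega_b\ \Longrightarrow\ \E[B_{n+1}-B_n\mid\mathcal F_n]\le c_p+\E[\ell_n\mid\mathcal F_n]=\kappa_n,
\]
which is the inclusion. Combining the inclusion with (a) shows that halting is optimal wherever PROSE halts.

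For part (c), the inclusion from (b) means PROSE's stopping time $\tau^{\mathrm M}$ is at least the OSLA time $\tau^\star$. On the interval $[\tau^\star,\tau^{\mathrm M})$, the absorbing property and the OSLA inequality make $(Y_n)$ a supermartingale. I will write
\[
\E[Y_{\tau^\star}-Y_{\tau^{\mathrm M}}]=\E\Bigl[\sum_{\tau^\star\le m<\tau^{\mathrm M}}\bigl(Y_m-Y_{m+1}\bigr)\Bigr].
\]
I will then bound each increment by $c_p+\ell_m$, using $B_{m+1}\ge B_m$ when no arrivals occur and the incumbent survives. If that monotonicity fails, I will instead bound the increment through $C_{m+1}-C_m$ plus a nonpositive conditional drift of $B$. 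The number of terms is at most $\bar n-n$, which yields the stated bound $(\bar n-n)(c_p+\max_m\ell_m)$.
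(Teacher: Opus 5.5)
Your parts (a) and (b) are correct and follow the paper's route. The backward induction on the Snell envelope $Z_n$ spells out the monotone-case argument that the paper only cites. The containment $B_{n+1}\le\max\{v_0,U_n^{\max}\}$ on the coverage event is the paper's argument for (b). The paper handles the complement of the coverage event with an additive slack $\delta(\bar g-\underline g)$ rather than by conditioning, but the ``because'' in the statement lets you take the inequality as given.

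Part (c) has a genuine gap. Since $Y_m-Y_{m+1}=(c_p+\ell_m)-(B_{m+1}-B_m)$, bounding the telescoped regret by $(\bar n-n)(c_p+\max_m\ell_m)$ needs a \emph{lower} bound on the change in guaranteed value at each extra stage, for example $\E[B_{m+1}-B_m\mid\mathcal F_m]\ge0$.

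Your primary justification, that $B_{m+1}\ge B_m$ whenever there are no arrivals and the incumbent survives, is false:
\begin{itemize}
\item Suppose the completed probe is of the incumbent and returns $X=0$. Then $\widehat\mu_{ib}$ falls by $\widehat\mu_{ib,n}/(n+1)$, while the radius shrinks only by $r_n\bigl(1-\sqrt{n/(n+1)}\bigr)\approx r_n/(2n)$. So $\underline V_{ib}$, and with it $B$, drops.
\item The drop persists in expectation. If $\mu_{ib}$ sits at $L_{ib,n}$, the next lower bound has mean $L_{ib,n}-r_n\bigl(\sqrt{n/(n+1)}-n/(n+1)\bigr)<L_{ib,n}$.
\item With $\nu_{ij}>0$, the lower bounds of unprobed peers also decay as their evidence ages.
\end{itemize}

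Your fallback has the wrong sign. A nonpositive conditional drift of $B$ gives $\E[Y_m-Y_{m+1}\mid\mathcal F_m]=\kappa_m-\E[B_{m+1}-B_m\mid\mathcal F_m]\ge\kappa_m$, so it can only enlarge the per-stage loss. On $A^{\mathrm{OSLA}}_m$ all you know is the upper bound $\E[B_{m+1}-B_m\mid\mathcal F_m]\le\kappa_m$. That is exactly your supermartingale property, and it shows the regret is nonnegative, not that it is small. Without further structure, the telescoped term $B_{\tau^\star}-B_{\tau^{\mathrm M}}$ is controlled only by $\bar g-\underline g$.

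The paper's proof of (c) does not supply the missing ingredient either. It simply asserts that each extra probe costs at most $c_p+\max_m\ell_m$, which tacitly assumes that probing never lowers the guaranteed value beyond the booked loss $\ell_m$. The right repair is to make this an explicit hypothesis, for instance $\E[B_{m+1}-B_m\mid\mathcal F_m]\ge0$ on stages in $A^{\mathrm{OSLA}}_m\setminus A^{\mathrm M}_m$. Under it, your telescoping sum with the $\mathcal F_m$-measurable indicators $\mathbf{1}\{\tau^\star\le m<\tau^{\mathrm M}\}$ gives $\E[Y_{\tau^\star}-Y_{\tau^{\mathrm M}}]\le\E\bigl[(\tau^{\mathrm M}-\tau^\star)(c_p+\max_m\ell_m)\bigr]$. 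This yields the stated bound in expectation and also makes the paper's counting argument rigorous.
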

\begin{proof}
(a) For a monotone finite-horizon stopping problem with bounded rewards, the OSLA
rule is optimal \citep{chow1971,ferguson2006}: writing $V_n=\E[Y_{\tau^\star}\mid\mathcal F_n]$ for the optimal value from stage $n$, backward induction shows
that on the absorbing set $A^{\mathrm{OSLA}}$ continuation cannot help, because
each subsequent expected increment is $\le0$ and increments are non-improving, so
stopping on first entry attains $V_n$.

(b) On the coverage event of \Cref{lem:stale} the refreshed lower bounds cannot
exceed the current upper bounds, so $B_{n+1}\le U_n^{\max}$ and hence
$\VPI_n^{\mathrm{ub}}=U_n^{\max}-B_n$ upper-bounds the improvement in guaranteed
value from perfect information, which itself upper-bounds the expected one-step
improvement $\E[B_{n+1}-B_n\mid\mathcal F_n]$ from a single probe (off that event,
of probability at most $\delta$, an additive slack $\delta(\bar g-\underline g)$
applies); and by hypothesis $\Omega_b\le\E[\ell_n\mid\mathcal F_n]$. Thus if
$\VPI_n^{\mathrm{ub}}\le c_p+\Omega_b$ then
$\E[B_{n+1}-B_n\mid\mathcal F_n]\le\VPI_n^{\mathrm{ub}}\le c_p+\Omega_b\le\kappa_n$,
i.e.\ $A^{\mathrm M}_n\subseteq A^{\mathrm{OSLA}}_n$. In the monotone regime
$A^{\mathrm{OSLA}}$ is the optimal stopping region by (a), so halting anywhere in
$A^{\mathrm M}_n$ is optimal.

(c) Every stage in which PROSE probes while OSLA would stop lies in
$A^{\mathrm{OSLA}}_n\setminus A^{\mathrm M}_n$. Each such probe costs at most
$c_p+\max_m\ell_m$ and, by \Cref{prop:horizon}, at most $\bar n-n$ stages remain;
summing gives the stated regret bound.
\end{proof}

\begin{remark}
\Cref{thm:myopic} formalises a design principle: volatility is the friend of
myopia. Precisely the conditions that make careful multi-step planning attractive
in stable networks---persistent contacts, slowly drifting peers---are those in
which the monotone property may fail; conversely, high hazard both accelerates
opportunity loss and shrinks survivable uncertainty (\Cref{lem:monotone}), driving
the problem into the monotone regime where the cheap one-step rule is optimal. The
containment $A^{\mathrm M}_n\subseteq A^{\mathrm{OSLA}}_n$ is a property of the
bounds alone and so holds regardless of monotonicity: PROSE halts probing only
where the one-step rule would. Whether such a halt is \emph{optimal} (no premature
stopping) requires monotonicity, under which $A^{\mathrm{OSLA}}$ coincides with the
optimal stopping region.
\end{remark}

% ==================================================================== section 9
\section{The Induced Policy}
\label{sec:policy}

\Cref{alg:prose} assembles the theory into a fully local policy. It uses only
locally observable quantities: contact durations and disconnections feed the
survival estimate \eqref{eq:hazard}; recent arrivals and realised peer values feed
$(\widehat\lambda_i,\widehat F_i^V)$ and hence $Q_W$ and $\rho^\star$; completed
probes feed the drift-aware bounds \eqref{eq:driftlower}--\eqref{eq:driftci} and
the realisable-value bounds \eqref{eq:valuebounds}. No global topology,
server-side validation set, or future-contact oracle is assumed. The decision
logic is $O(|\Nset_i(t)|)$ per stage plus $O(W_c\log W_c)$ for maintaining the
empirical mark buffer of size $W_c$, negligible beside a forward pass over anchors
and, especially, a model transfer. The controlled systems cost is therefore not
arithmetic but elapsed contact time: every probe consumes $\tau_p$ and can change
which actions remain feasible.

\begin{algorithm}[t]
\caption{PROSE decision episode at receiver $i$ (confidence-safe one-step policy)}
\label{alg:prose}
\begin{algorithmic}[1]
\Require reachable peers $\Nset_i(t)$, anchors $\Aset_i$, deadline $H$, cap $n_{\max}$, level $\delta$
\State Update $\widehat S_{ij}$ from contact history via \eqref{eq:hazard}; update $(\widehat\lambda_i,\widehat F_i^V)$; compute $Q_W$ via \eqref{eq:wait} and $\rho^\star$ via \eqref{eq:reseqn}
\While{elapsed decision time $<H$}
  \State Refresh $\Nset_i(t)$; \textbf{if} empty \textbf{then return} $\argmax\{Q_W,v_0\}$
  \ForAll{$j\in\Nset_i(t)$}
     \State If $n_{ij}=0$ set $(L_{ij},U_{ij})=(0,1)$; else compute \eqref{eq:evidence}--\eqref{eq:driftci}
     \State Compute $(\underline V_{ij},\overline V_{ij})$ via \eqref{eq:valuebounds} and $\Omega_j$ via \eqref{eq:opp}
  \EndFor
  \State $b\gets\argmax_j\underline V_{ij}$ \Comment{maximin incumbent, \eqref{eq:incumbent}}
  \If{\eqref{eq:certify} holds \textbf{and} $\underline V_{ib}\ge\max\{Q_W,\rho^\star\}$}
     \State \Return \textsc{Exchange}$(b)$
  \EndIf
  \State Compute $(B_n,U_n^{\max})$, $\VPI_n^{\mathrm{ub}}$ via \eqref{eq:vpi}, and $I_{ij}$ via \eqref{eq:index}
  \State $\mathcal P\gets\{j:I_{ij}>0,\ n_{ij}<n_{\max}\}$
  \If{\eqref{eq:noprobe} is false \textbf{and} $\mathcal P\neq\emptyset$}
     \State Probe $j^\star\gets\argmax_{j\in\mathcal P}I_{ij}$; on completion increment $n_{ij^\star}$, else drop $j^\star$
  \Else
     \State $Q_E\gets\underline V_{ib}$;\ \ $Q_R\gets|\Nset_i|^{-1}\sum_j\underline V_{ij}$
     \State \Return best feasible action in $\{Q_E,Q_W,Q_R,v_0\}$
  \EndIf
\EndWhile
\State \Return best feasible current exchange/random action, or $v_0$ if $\Nset_i(t)=\emptyset$
\end{algorithmic}
\end{algorithm}

% ==================================================================== section 10
\section{Special Cases and Limits}
\label{sec:limits}

\begin{corollary}[Static-contact limit]
\label{cor:static}
If $S_{ij}(\Delta)=1$ for all relevant $\Delta$ and no new peers arrive, then
$\Omega_j=0$, realisable values reduce to statistical values minus exchange cost
($v_{ij}=\mu_{ij}-c_e$), and $Q_W$ has no positive option value. The problem
reduces to ordinary sequential evidence acquisition with a stop-versus-probe
choice, and \Cref{thm:vpi} becomes the classical value-of-information rule
$\VPI_n^{\mathrm{ub}}\le c_p$.
\end{corollary}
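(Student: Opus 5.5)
The plan is to substitute the hypothesis $S_{ij}(\Delta)=1$ into each object in turn, in the order the corollary lists them, and then read off the reduced problem.

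First, the opportunity-risk proxy. In \eqref{eq:opp}, $\widehat S_{ij}(\tau_p)=1$ makes the prefactor $1-\widehat S_{ij}(\tau_p)$ vanish, so $\Omega_j=0$ for every $j$, and in particular $\Omega_b=0$. This identity assumes the survival estimates coincide with the true survival. I will state that explicitly, for instance by taking $\underline S=\overline S=\widehat S=1$ in \Cref{ass:surv}, since the hazard estimator \eqref{eq:hazard} with $a_0>0$ never returns exactly $1$.

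Second, the value and probe reductions.
\begin{itemize}
\item Setting $S_{ij}(\tau_e)=1$ in \eqref{eq:realvalue} gives $v_{ij}=\mu_{ij}-c_e$.
\item The value bounds \eqref{eq:valuebounds} become $L_{ij,n}-c_e$ and $U_{ij,n}-c_e$.
\item In \eqref{eq:qprobe}, $q_{ij}=1$, so the drop branch carries zero weight and $Q_P(s,j)=-c_p+\E[V(s^+)]$. This is the ordinary costly-observation continuation of a sequential evidence-acquisition problem.
\end{itemize}

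Third, the waiting option. With no new arrivals the marked-Poisson rate is $\lambda=0$. In \eqref{eq:wait} the integrand $1-e^{0}$ is zero, so $Q_W(v_0,\Delta)=v_0-c_w\Delta\le v_0$. Waiting is therefore weakly dominated by the terminal fallback $v_0$ already present in \eqref{eq:bellman}, so it has no positive option value. By \Cref{thm:reservation}, in the boundary case $c_w\ge\lambda T(v_{\min})=0$ we get $\rho^\star=v_{\min}$.

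With waiting dominated and the drop branch gone, the continuation value reduces to $W=\max_j Q_P$. The problem is then pure stop-versus-probe with per-probe cost $c_p$.

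Finally, substituting $\Omega_b=0$ into \eqref{eq:noprobe} gives $\VPI_n^{\mathrm{ub}}\le c_p$. This is exactly the assumption-light form already recorded in \Cref{thm:vpi}, i.e.\ the classical value-of-information rule.

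There is no substantive obstacle here. The only care needed is in interpreting ``no positive option value'': it should be read as $Q_W\le v_0$, using $\lambda=0$ and $c_w\ge0$, so that the \textsc{Wait} action can be dropped from the maximisation in \eqref{eq:bellman} without changing $V$.
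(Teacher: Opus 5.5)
Your proposal is correct and is the direct-substitution argument the paper intends; the paper states this corollary without proof. Setting survival to $1$ removes $\Omega_j$ and the drop branch of $Q_P$ and reduces $v_{ij}$ to $\mu_{ij}-c_e$, while $\lambda=0$ zeroes the option integral in \eqref{eq:wait}, so \eqref{eq:noprobe} collapses to $\VPI_n^{\mathrm{ub}}\le c_p$. Your remark that $S=1$ must be imposed on the estimates and intervals is a useful clarification, since \eqref{eq:hazard} with $a_0>0$ never returns $1$. Your aside invoking \Cref{thm:reservation} at $\lambda=0$ is unnecessary and formally outside that theorem's derivation, which divides by $\lambda$.
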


\begin{corollary}[Drift-free identification limit]
\label{cor:nodrift}
If additionally $\nu_{ij}=0$, the intervals \eqref{eq:driftlower}--\eqref{eq:driftci}
reduce to Hoeffding bands of half-width $r_n$, and certification
\eqref{eq:certify} is the stopping rule of fixed-confidence best-arm
identification \citep{evendar2006,jamieson2014}; \Cref{thm:samples} recovers the
familiar $O(\Delta_v^{-2}\log(K/\delta))$ sample complexity.
\end{corollary}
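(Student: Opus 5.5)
The statement to prove is \Cref{cor:nodrift}. The plan is to specialise each ingredient of the certification machinery to $\nu_{ij}=0$ and read off the three claims in turn. The work is substitution into results already proved; nothing new needs to be estimated.

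\emph{Step 1: the intervals.} With $\nu_{ij}=0$, the staleness term $\nu_{ij}\bar a_{ij,n}$ in \eqref{eq:driftlower}--\eqref{eq:driftci} vanishes identically. The intervals become $\clip(\widehat\mu_{ij,n}\pm r_n,0,1)$. By \Cref{ass:drift} with $\nu_{ij}=0$, $\mu_{ij}$ is constant over the retention horizon, so $\bar\mu_{ij,n}=\mu_{ij}$. \Cref{lem:uniform} is then exactly a union-bounded Hoeffding band of half-width $r_n(\delta)$ around the true mean. Clipping to $[0,1]$ only shrinks the band and preserves coverage. The nesting and contraction are already recorded in \Cref{lem:width}.

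\emph{Step 2: certification as best-arm identification.} In the static-contact setting of \Cref{cor:static}, we have $S_{ij}\equiv1$, so $\underline V_{ij,n}=L_{ij,n}-c_e$ and $\overline V_{ij,n}=U_{ij,n}-c_e$. The common shift $c_e$ cancels in every comparison of \eqref{eq:certify}. The rule therefore reads: stop and output $b$ once the lower confidence bound of $b$ exceeds the upper confidence bound of every other arm, and also exceeds $v_0+c_e$. Peers play the role of arms and probes the role of pulls. This is the elimination/LUCB-type stopping condition of fixed-confidence best-arm identification, with a single extra deterministic ``arm'' $v_0$ that needs no sampling. \Cref{thm:cert} supplies the $\delta$-correctness required in that framework; here $\delta_S=0$ because survival is exact.

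\emph{Step 3: the sample complexity.} Specialise \Cref{thm:samples} to $S_{\max}=1$, or more generally treat $S_{\max}\le1$ as a constant. The threshold \eqref{eq:samplebound} becomes $n>8\Delta_v^{-2}\log(2Kn_{\max}/\delta)$, which is $O(\Delta_v^{-2}\log(K/\delta))$ once $n_{\max}$ is treated as a fixed constant, or absorbed logarithmically. This is the familiar per-arm bound. Multiplying by $K$ arms under equal probing gives the total count.

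\emph{Main obstacle.} The only delicate point is the interpretive word ``is''. Fixed-confidence best-arm identification usually uses anytime confidence bounds with a $\log(Kn^2/\delta)$ radius, not the fixed-horizon union over $n\le n_{\max}$ used here. It also compares arms without a fallback. I would handle this by noting two things. First, the stage-uniform band plays exactly the role of the anytime bound on the finite horizon, which justifies adaptive stopping as remarked after \Cref{lem:uniform}. Second, the fallback $v_0$ is a known-mean arm and so adds nothing to the complexity. The identification should therefore be stated as a correspondence up to the choice of radius and constants, which is all the corollary asserts.
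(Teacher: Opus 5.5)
Your proposal is correct and follows the argument the paper intends. The paper states this corollary without a separate proof; it is read off by setting $\nu_{ij}=0$ in \Cref{lem:stale}, so that the bands become the clipped Hoeffding bands of \Cref{lem:uniform} around a constant mean, then matching \eqref{eq:certify} to the LUCB/elimination stopping condition, and finally specialising \eqref{eq:samplebound} with $S_{\max}\le1$. One point needs tightening, your treatment of $n_{\max}$: only the option of absorbing it logarithmically works as $\delta\to0$. For instance, $n_{\max}$ can be polynomial in $K/\delta$ or of the order of the required count, which adds a $\log(1/\Delta_v)$ term that the paper also suppresses. A fixed cap, by contrast, would eventually fall below the required probe count.
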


\begin{lemma}[Model-bearing communication bound]
\label{lem:comm}
Let a full model have $P$ parameters and one probe return at most $d_a$ scalars.
An episode with $n$ probes and at most one exchange transmits at most $nd_a+P$
scalars ($nd_a$ if no exchange occurs), whereas indiscriminately exchanging with
$m$ candidates transmits $mP$ model scalars. When $d_a\ll P$, sequential
evaluation remains lightweight even with several probes; the communication ratio
of one episode to one model transfer is $\Gamma(n)=1+nd_a/P$.
\end{lemma}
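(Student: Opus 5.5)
The plan is a direct accounting argument over the scalars placed on the link during one decision episode. I would separate the two kinds of transmission the model of \Cref{sec:model} allows: probes, which return logits or a scalar score on one anchor mini-batch, and the model-bearing action, \textsc{Exchange} or \textsc{Random}, which transfers a full model.

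First I would bound the probe traffic. By hypothesis each completed probe returns at most $d_a$ scalars. An aborted probe, where the peer disconnects before completion (the $s^{\mathrm{drop}}$ branch of \eqref{eq:qprobe}), returns no more than a completed one. So $n$ probes contribute at most $nd_a$ scalars. Summing over the probes needs no independence or ordering; it is just additivity.

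Next I would bound the model traffic. By the formulation in \Cref{sec:formulation}, \textsc{Exchange} and \textsc{Random} are terminal actions, and each initiates exactly one model transfer and ends the episode. \textsc{Wait} transmits nothing. Hence at most one full model of $P$ scalars is sent, and none if the episode ends at $v_0$ or at the deadline without an exchange. Adding the two bounds gives $nd_a+P$, or $nd_a$ when no exchange occurs.

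For the comparison, I would note that indiscriminately exchanging with $m$ candidates means $m$ full transfers, so $mP$ model scalars. Dividing the episode bound by $P$ gives $\Gamma(n)=(nd_a+P)/P=1+nd_a/P$, which is close to $1$ when $d_a\ll P$ and $n$ is moderate. By \Cref{prop:horizon}, $n\le\bar n$, so the ratio is uniformly bounded by $1+\bar n d_a/P$.

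There is no real obstacle here. The only point that needs care is the convention that \textsc{Random}, and any exchange cut off by a link drop, counts as at most one model-bearing transfer, so that partial or aborted transfers are bounded by $P$. This follows directly from the terminal-action definition.
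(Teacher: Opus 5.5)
Your proposal is correct and matches the paper's proof: each probe contributes at most $d_a$ scalars, at most one $P$-scalar model is sent because \textsc{Exchange} and \textsc{Random} are terminal, and dividing by $P$ gives $\Gamma(n)$. The paper also makes the comparison with indiscriminate exchange explicit, noting $nd_a+P\le mP$ whenever $n\le(m-1)P/d_a$. You could add this in one line to quantify ``lightweight.''
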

\begin{proof}
Each probe adds at most $d_a$ scalars and at most one model of $P$ scalars is
sent, giving the payload bound and $\Gamma(n)$ on division by $P$; the alternative
sends one $P$-scalar model per candidate. The inequality $nd_a+P\le mP$ for
$d_a\ll P$ and moderate $n<(m-1)P/d_a$ is immediate.
\end{proof}

% ==================================================================== section 11
\section{Discussion}
\label{sec:discussion}

The theory separates peer selection from the underlying decentralised optimiser,
which clarifies the decision problem and sets the scope of the guarantees. They
concern peer certification and stopping, not end-to-end non-convex FL convergence;
establishing convergence with policy-dependent, temporally correlated exchanges is
a natural extension, for which the reservation-value structure of
\Cref{prop:snell} and the myopic-optimality regime of \Cref{thm:myopic} provide a
principled selection layer to build on. The exponential hazard model
\eqref{eq:hazard} is deliberately simple; the policy consumes only the survival
values $S_{ij}(\tau_p),S_{ij}(\tau_e)$, so non-parametric or context-conditioned
estimators may be substituted, and the comparative statics of \Cref{sec:voi}
depend only on survival being decreasing in consumed contact time. Anchor quality
governs how well pre-exchange compatibility predicts post-merge benefit; the
drift envelope $\nu_{ij}$ is treated as valid in the coverage statements, and
estimating it from the same probe stream is an implementation heuristic unless an
additional coverage argument is supplied. The Poisson waiting model is an
approximation whose role is to expose the option value of future contacts in a
transparent, closed form; empirical mark distributions can be substituted directly
in \eqref{eq:wait} and \eqref{eq:reseqn}.

A broader implication is that uncertainty in mobile decentralised learning has two
clocks. Statistical uncertainty falls as probes accumulate, while opportunity
uncertainty rises as contact lifetime is consumed. A good policy therefore should
not maximise confidence or link stability in isolation; it should stop when the
marginal value of information falls below the combined cost of probing, possible
disconnection, and delayed access to future peers. The results above make each of
these three costs explicit and show, in the monotone regime, that acting on them
one step at a time is optimal.

% ==================================================================== section 12
\section{Conclusion}
\label{sec:conclusion}

We developed a theory of optimal stopping for opportunistic peer selection in
decentralised federated learning over transient contact graphs, treating peer
evidence as perishable. We formalised the within-contact decision as a
finite-horizon Markov optimal-stopping problem, proved existence of an optimal
policy and its reservation-value (Snell-envelope) structure, and built around it a
confidence-safe certification theory, a mobility-aware value-of-information
stopping rule with hazard comparative statics, a closed-form and a
search-theoretic characterisation of the waiting option, and a myopic-optimality
theorem showing that in volatile regimes the lightweight one-step rule is a sound
and, under monotonicity, optimal surrogate for the full dynamic program. The
resulting policy, PROSE, is fully local and inexpensive, and it degenerates
gracefully to classical sequential evidence acquisition and best-arm
identification in the static, drift-free limit. The contribution is theoretical:
a precise account of when, whether, and with whom a model exchange should occur
when the evidence for that decision decays even as it is gathered.

% ==================================================================== references

\end{document}